\documentclass{article}

\usepackage{microtype}
\usepackage{graphicx}
\usepackage{subcaption}
\usepackage{booktabs} 
\usepackage{enumitem}
\usepackage{multirow}

\usepackage{hyperref}

\usepackage[accepted]{icml2026}

\usepackage{amsmath}
\usepackage{amssymb}
\usepackage{mathtools}
\usepackage{amsthm}

\usepackage[capitalize,noabbrev]{cleveref}
\crefname{equation}{Eq.}{Eqs.}  %

\theoremstyle{plain}
\newtheorem{theorem}{Theorem}[section]
\newtheorem{proposition}[theorem]{Proposition}

\newtheorem{corollary}[theorem]{Corollary}
\theoremstyle{definition}

\theoremstyle{remark}

\usepackage[disable,textsize=tiny]{todonotes}

\icmltitlerunning{Quaternion Self-Attention with Shared Scores}

\begin{document}

\twocolumn[
  \icmltitle{Quaternion Self-Attention with Shared Scores}



  \icmlsetsymbol{equal}{*}
  \begin{icmlauthorlist}
    \icmlauthor{Shogo Yamauchi}{asahi}
    \icmlauthor{Tohru Nitta}{twcu}
    \icmlauthor{Hideaki Tamori}{asahi}
  \end{icmlauthorlist}
  \icmlaffiliation{asahi}{The Asahi Shimbun Company, Tokyo, Japan}
  \icmlaffiliation{twcu}{Tokyo Woman's Christian University, Tokyo, Japan}
  \icmlcorrespondingauthor{Shogo Yamauchi}{yamauchi-s1@asahi.com}
  \icmlcorrespondingauthor{Tohru Nitta}{tnitta@lab.twcu.ac.jp}
  \icmlcorrespondingauthor{Hideaki Tamori}{tamori-h@asahi.com}
  \icmlkeywords{Quaternion, Attention}
  \vskip 0.3in
]



\printAffiliationsAndNotice{}  

\begin{abstract}   
Quaternion neural networks are parameter-efficient and model multidimensional dependencies by representing four related features as a single entity. However, existing quaternion self-attention computes component-wise scores and applies independent softmax operations to each component, which increases the computational cost and allows attention distributions to diverge across components. We propose a shared-score quaternion self-attention mechanism that computes a single real-valued score using the quaternion inner product and applies a shared attention distribution across all components. This reduces score-computation multiplications by 75\% and the number of softmax operations from four to one. We prove that, when queries and keys are produced by quaternion linear projections that induce component pre-mixing, the component-wise and shared scores lie in the same interaction subspace, indicating that independent component-wise attention primarily re-parameterizes the same interactions rather than expanding the feature interaction space. In speech enhancement, our method reduces inference time by up to 44.3\% on a GPU and 58.1\% on a CPU while maintaining quality, with consistent trends across vision and natural language processing. \textsuperscript{2}

\end{abstract}

\footnotetext[2]{\emph{https://github.com/asahi-research/Quaternion-Self-Attention-with-Shared-Scores}}


\section{Introduction}

\begin{figure}[tb]
    \centering
    \includegraphics[width=\columnwidth]{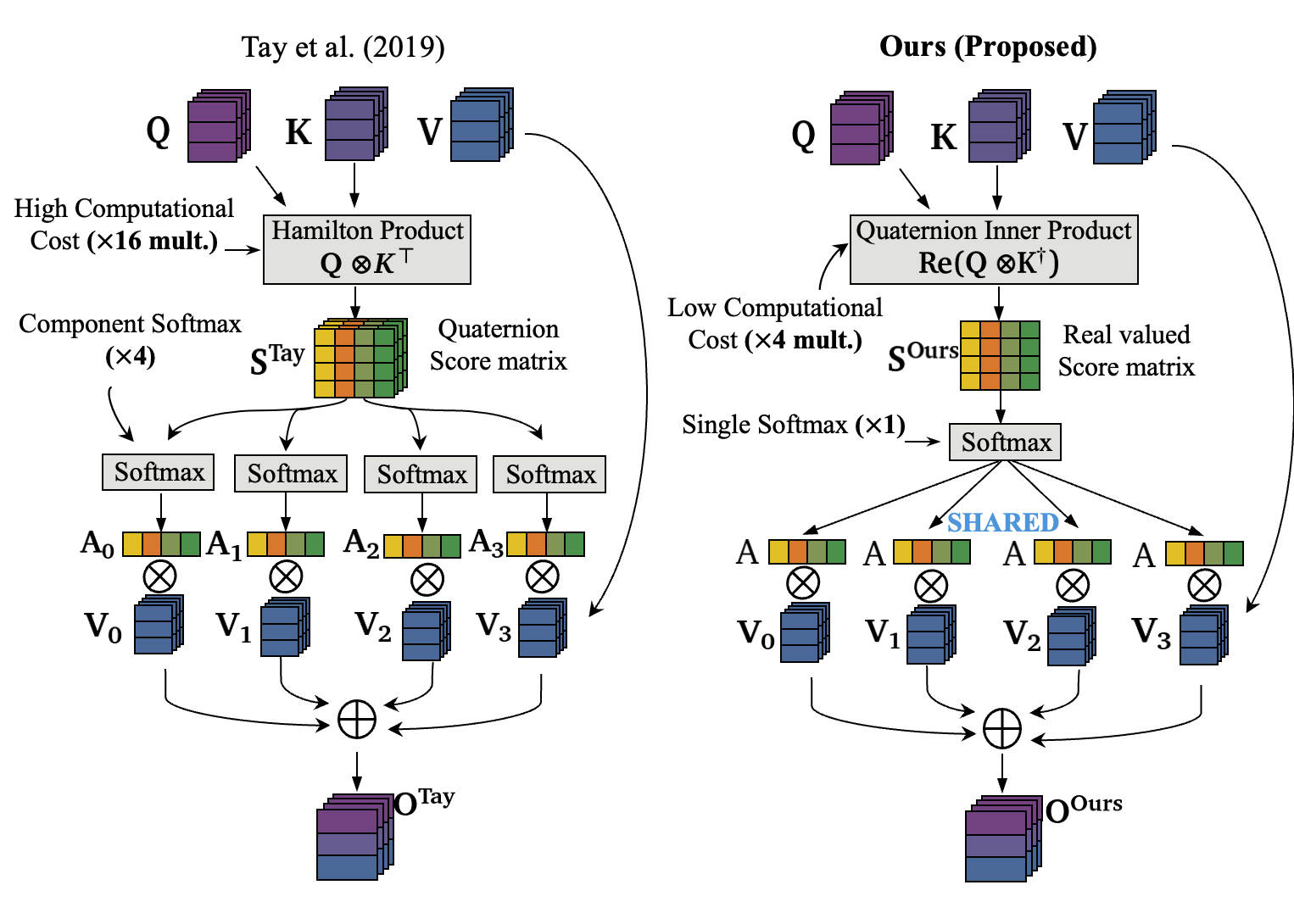}
    \caption{Comparison of quaternion self-attention architectures. \textbf{Left:}~\cite{tay-etal-2019-lightweight} computed the full Hamilton product using four independent softmax operations, which led to high computational cost and produced component-wise attention distributions. \textbf{Right (Ours):} Our method derives a single shared score via the quaternion inner product, reducing multiplications by 75\% while preserving inter-component relationships.}
    \label{fig:architecture-comparison}
\end{figure}

Quaternion neural networks (QNNs) leverage quaternions~\cite{Hamilton1847ONQO} to represent four related features as a single entity, achieving high representational power with fewer parameters~\cite{QuaternionBP,Gaudet2018,Isokawa2003,Isokawa2009,yamauchi2025}. 
Although QNNs have been applied to various tasks~\cite{Zhu_2018_ECCV,Hashim2022,grant2025quan}, designing quaternion self-attention mechanisms remains challenging in terms of computational efficiency and structural preservation.

~\citet{tay-etal-2019-lightweight} computed a quaternion-valued score matrix using the Hamilton product and applied softmax independently to each component. This design required 16 real-valued multiplications per quaternion pair and four independent softmax operations. While such component-wise independent attention could theoretically enhance representational power by capturing diverse features across quaternion components, a fundamental question arises: Does this independence genuinely contribute to performance improvement?
 
Our analysis suggests otherwise.
Specifically, we observe that the four components in the analysis by ~\citet{tay-etal-2019-lightweight} often attend to different positions, exhibiting only 3.83\% inter-component top-1 (argmax) agreement. Although such diversity might intuitively be expected to enhance representational power, we found that it did not translate into measurable performance gains and might instead disrupt the intended quaternion coupling.

To address this redundancy and computational inefficiency, we propose the ``shared-score quaternion self-attention 
mechanism'' (\cref{fig:architecture-comparison}). Unlike the component-wise approach, our method computes a single attention score using the quaternion inner product, shared across all four components. This ensures consistent alignment while significantly reducing computational cost.

We validated our method primarily on single-channel speech enhancement. A central difficulty of this task is that the magnitude and phase of the short-time Fourier transform (STFT) spectrum must be estimated jointly; they are tightly coupled in the underlying complex spectrum, yet real-valued networks process them as independent channels and tend to lose this coupling. Quaternions are well suited to this regime, as the Hamilton product allows the magnitude and phase to be learned as a single coupled entity rather than as separate features. Real-time factor (RTF) is also a first-class concern in this domain, making efficient quaternion attention particularly valuable. To the best of our knowledge, this represents the first QNN application in this domain. The results demonstrate that a single shared score preserves performance, with consistent trends also observed on CIFAR-100 and SST-2.

Our contributions are as follows:
\begin{itemize}[nosep, leftmargin=*] 
    \item \textbf{Identifying Structural Redundancy:} 
    Under the pre-mixing induced by quaternion linear projections (Theorem C.1), the independent expressive power of the four components is structurally redundant and does not translate into measurable performance gains.
    \item \textbf{Efficient Shared-Score Mechanism:} 
    We propose a quaternion self-attention mechanism based on the quaternion inner product. This design minimizes the number of score computation multiplications by 75\% and softmax operations from four to one, by employing a single attention distribution shared across all components to preserve the intrinsic quaternion coupling.
    \item \textbf{Empirical Validation:} 
    Our method reduces RTF by up to 44.3\% on a GPU and 58.1\% on a CPU over existing quaternion attention, with consistent trends across vision and natural language processing (NLP). We demonstrate that the single shared score closely approximates the attention patterns of computationally expensive baseline models while maintaining comparable performance.
\end{itemize}


\section{Related Work}
\label{sec:related}
The quaternion attention scores approach, first formulated by \citet{tay-etal-2019-lightweight}, incurred significant computational overhead. Subsequent research~\cite{chen2022quatformer,Yang2023_QTN,Zhou2024_QEAN} deviated from this formulation by adopting quaternion convolution or rotational embedding. Although effective for local feature extraction or positional encoding, these approaches do not compute attention scores using the Hamilton product. Similarly, a recent study on pruning strategies for quaternion Transformers~\cite{mukhopadhyay2024transformers} employed quaternion algebra solely for parameter sharing in weight matrices, rather than for attention computation. Consequently, the algorithmic efficiency and theoretical properties of Hamilton-product-based attention have remained virtually unexplored since the study by \citet{tay-etal-2019-lightweight}.

In the broader speech enhancement landscape, recent state-of-the-art models have leveraged state-space models~\cite{chao2024} or specialized lightweight architectures~\cite{yan2025}. These advances target the SE backbone rather than the attention primitive itself, and are therefore orthogonal to the gap identified above. Our study addresses this gap by revisiting Hamilton-product-based attention and investigating whether component-wise independence is necessary.

\section{Proposed Method}

In this section, we formalize the proposed quaternion self-attention mechanism. Let $d_{\mathrm{model}}$ and $d_h$ denote the quaternion feature dimensions of the model and each head, respectively, and let the number of heads be $H$. We employ $\otimes$ for the Hamilton product and $\cdot$ for the real-valued quaternion inner products.

\subsection{Quaternion Algebra}
\label{subsec:quat_algebra}
A quaternion $q$ is defined by the real part $q_0$ and three imaginary parts $q_1, q_2, q_3$ as follows:
\begin{equation}
  q = q_0 + q_1 {\sf i} + q_2 {\sf j} + q_3 {\sf k}.
\end{equation}
The set of all quaternions is denoted by $\mathbb{H}$. 
The imaginary units ${\sf i}$, ${\sf j}$, and ${\sf k}$ satisfy
\begin{equation}
  {\sf i}^2 = {\sf j}^2 = {\sf k}^2 
  = {\sf i}{\sf j}{\sf k} = -1.
\end{equation}
The conjugate and norm are denoted by
\begin{equation}
  q^* = q_0 - q_1 {\sf i} - q_2 {\sf j} - q_3 {\sf k}, 
  \qquad 
  \|q\|^2 = q \otimes q^*.
\end{equation}
The Hamilton product $p \otimes q$ of two quaternions $p, q$ is expressed as
\begin{align}
p \otimes q
 &=       (p_0 q_0 - p_1 q_1 - p_2 q_2 - p_3 q_3) \nonumber\\
 &\quad + (p_0 q_1 + p_1 q_0 + p_2 q_3 - p_3 q_2){\sf i} \nonumber\\
 &\quad + (p_0 q_2 - p_1 q_3 + p_2 q_0 + p_3 q_1){\sf j} \nonumber\\
 &\quad + (p_0 q_3 + p_1 q_2 - p_2 q_1 + p_3 q_0){\sf k}.
\end{align}

\subsection{Quaternion Inner Product}
\label{subsec:quat_inner_product}

For quaternions $x = x_0 + x_1{\sf i} + x_2{\sf j} + x_3{\sf k}$ and 
$y = y_0 + y_1{\sf i} + y_2{\sf j} + y_3{\sf k} \in \mathbb{H}$,
we define the \textbf{quaternion inner product (scalar product)} following
\citet{Grlebeck2007HolomorphicFI} as
\begin{align}
\label{eq:quat_scalar_product_def}
  x \cdot y
  &:= \frac{1}{2}\left(x\otimes y^{*} + y\otimes x^{*}\right) \notag\\
  &= x_0 y_0 + x_1 y_1 + x_2 y_2 + x_3 y_3.
\end{align}
This inner product can be expressed as
\begin{equation}
\label{eq:quat_inner_hamilton}
  x \cdot y
  = \mathrm{Re}\!\left(x \otimes y^{*}\right).
\end{equation}
Specifically, consider the real part of the Hamilton product with the conjugate to be equivalent to the quaternion inner product. For quaternion vectors $\mathbf{q}, \mathbf{k} \in \mathbb{H}^d$,
we define the inner product as the sum of the quaternion inner products over all components as follows: 

\begin{align}
\label{eq:quat_inner_product}
  \langle \mathbf{q}, \mathbf{k} \rangle
  &:= \sum_{\ell=1}^{d} q^{(\ell)} \cdot k^{(\ell)} \notag\\
  &= \sum_{\ell=1}^{d}
     \left(
       q_0^{(\ell)} k_0^{(\ell)}
     + q_1^{(\ell)} k_1^{(\ell)}
     + q_2^{(\ell)} k_2^{(\ell)}
     + q_3^{(\ell)} k_3^{(\ell)}
     \right),
\end{align}

where $q^{(\ell)} = q_0^{(\ell)} + q_1^{(\ell)}{\sf i} + q_2^{(\ell)}{\sf j} + q_3^{(\ell)}{\sf k}$ and 
$k^{(\ell)} = k_0^{(\ell)} + k_1^{(\ell)}{\sf i} + k_2^{(\ell)}{\sf j} + k_3^{(\ell)}{\sf k}$.

This is equivalent to the Euclidean inner product in $\mathbb{R}^{4d}$ and satisfies (i) positive definiteness, (ii) symmetry, and (iii) bilinearity, establishing it as a mathematically valid similarity measure (see Appendix~\ref{subsec:main_properties}).

\subsection{Quaternion Self-Attention with Shared Scores}
\label{subsec:shared_score_attention}
For a quaternion input $\mathbf{X}\in\mathbb{H}^{T\times d_{\mathrm{model}}}$ with sequence length $T$, we generate the queries, keys, and values via quaternion linear transformations (Appendix~\ref{sec:qlinear}) as follows:
\begin{equation}
  \mathbf{Q} = \mathbf{X}\otimes \mathbf{W}_Q,\quad
  \mathbf{K} = \mathbf{X}\otimes \mathbf{W}_K,\quad
  \mathbf{V} = \mathbf{X}\otimes \mathbf{W}_V,
\end{equation}
where $\mathbf{W}_Q,\mathbf{W}_K,\mathbf{W}_V\in\mathbb{H}^{d_{\mathrm{model}}\times d_h}$ are quaternion weight matrices corresponding to each head, and $d_h$ denotes the quaternion feature dimension per head. Let $\mathbf{Q}_i,\mathbf{K}_j,\mathbf{V}_j \in \mathbb{H}^{d_h}$ denote the quaternion vectors for the $i$-th and $j$-th tokens, respectively.

In our method, the score matrix $\mathbf{S}\in\mathbb{R}^{T\times T}$ is defined as
\begin{align}
\label{eq:ours_score_matrix}
  \mathbf{S} 
  & = \frac{1}{\sqrt{4d_h}} \mathrm{Re}(\mathbf{Q} \otimes \mathbf{K}^\dagger)
\end{align}
where $\mathbf{K}^{\dagger} := {\mathbf{K}^*}^\top$ denotes the conjugate transpose, and $\mathrm{Re}(\cdot)$ denotes the real part applied element-wise to the quaternion matrix.
Here, the scaling factor $1/\sqrt{4d_h}$ is adopted because our formulation is mathematically equivalent to the standard scaled dot-product attention in the expanded real-valued space $\mathbb{R}^{4d_h}$, ensuring consistent variance.

Each element is expanded as follows:
\begin{equation}
\label{eq:ours_score_element}
  \mathbf{S}_{ij}
  = \frac{1}{\sqrt{4d_h}}\,
    \langle \mathbf{Q}_i, \mathbf{K}_j \rangle
  = \frac{1}{\sqrt{4d_h}}
    \sum_{\ell=1}^{d_h} Q_i^{(\ell)} \cdot K_j^{(\ell)}.
\end{equation}
The quaternion inner product is equivalent to the Euclidean 
inner product in $\mathbb{R}^{4d_h}$ (\cref{eq:quat_inner_product}), and 
\cref{eq:ours_score_element} corresponds to the scaled dot-product 
of the real-valued expansions 
$\tilde{\mathbf{Q}}_i, \tilde{\mathbf{K}}_j \in \mathbb{R}^{4d_h}$, 
normalized by $\sqrt{4d_h}$, where 
$\tilde{\mathbf{Q}}_i = [Q_{i,0}^\top, Q_{i,1}^\top, Q_{i,2}^\top, Q_{i,3}^\top]^\top$ 
and $\tilde{\mathbf{K}}_j = [K_{j,0}^\top, K_{j,1}^\top, K_{j,2}^\top, K_{j,3}^\top]^\top$ 
denotes the concatenation of the four quaternion components.

This score aggregates interactions among all four quaternion components into a single scalar. 
Consequently, the alignment captures the aggregate similarity of the corresponding quaternion components using a single attention distribution shared across components. This simplification applies only to score computation; $\mathbf{Q}, \mathbf{K}, \mathbf{V}$ are generated by quaternion-parameterized projections with structured parameter sharing.

The attention weight matrix $\mathbf{A}^{\mathrm{ours}}\in\mathbb{R}^{T\times T}$ 
is computed by applying softmax row-wise as follows:
\begin{equation}
  \mathbf{A}^{\mathrm{ours}} = \mathrm{softmax}(\mathbf{S}).
\end{equation}
Crucially, $\mathbf{A}^{\mathrm{ours}}$ is a real-valued matrix shared across all quaternion components.

Expressing the value as
\begin{equation}
  \mathbf{V} = \mathbf{V}_0 + \mathbf{V}_1{\sf i} + \mathbf{V}_2{\sf j} + \mathbf{V}_3{\sf k},
\end{equation}
the output $\mathbf{O}^{\mathrm{ours}}\in\mathbb{H}^{T\times d_h}$ 
is computed as
\begin{equation}
  \mathbf{O}^{\mathrm{ours}}
  = \mathbf{A}^{\mathrm{ours}}\mathbf{V}_0
   + \mathbf{A}^{\mathrm{ours}}\mathbf{V}_1{\sf i}
   + \mathbf{A}^{\mathrm{ours}}\mathbf{V}_2{\sf j}
   + \mathbf{A}^{\mathrm{ours}}\mathbf{V}_3{\sf k}.
\end{equation}
Specifically, all four components share identical attention weights, preserving the quaternion structure, where the four components represent a single entity. Furthermore, softmax is computed only once on a $T\times T$ real-valued matrix. In multi-head attention, this computation is performed in parallel across all $H$ heads.

In contrast,~\citet{tay-etal-2019-lightweight} computed 
a quaternion-valued score matrix 
$\mathbf{S}^{\mathrm{Tay}}\in\mathbb{H}^{T\times T}$ using the Hamilton product by applying softmax independently to each component as follows:
\begin{equation}
  \mathbf{S}^{\mathrm{Tay}} =
  \frac{1}{\sqrt{d_h}}\,\mathbf{Q}\otimes \mathbf{K}^\top
  = \mathbf{S}_0 + \mathbf{S}_1{\sf i} + \mathbf{S}_2{\sf j} + \mathbf{S}_3{\sf k},
\end{equation}
where $d_h$ denotes the quaternion feature dimension per head, i.e., $\mathbf{Q}_i,\mathbf{K}_j\in\mathbb{H}^{d_h}$, whose real-valued expansion has dimension $4d_h$. Our shared score is computed as a real-valued dot product in $\mathbb{R}^{4d_h}$ and therefore uses the standard scaling $1/\sqrt{4d_h}$,
whereas~\citet{tay-etal-2019-lightweight} scaled the quaternion-valued score by $1/\sqrt{d_h}$ and then applied softmax independently to each component.
\begin{equation}
  \mathbf{A}_\alpha = \mathrm{Componentsoftmax}(\mathbf{S}^{\mathrm{Tay}}_\alpha), \quad \alpha \in \{0, 1, 2, 3\},
\end{equation}
where $\mathrm{Componentsoftmax}$ independently applies softmax to each quaternion component $(0, 1, 2, 3)$. The output is then computed as follows:
\begin{equation}
  \mathbf{O}^{\mathrm{Tay}}
  = \mathbf{A}_0 \mathbf{V}_0
  + \mathbf{A}_1 \mathbf{V}_1 {\sf i}
  + \mathbf{A}_2 \mathbf{V}_2 {\sf j}
  + \mathbf{A}_3 \mathbf{V}_3 {\sf k}.
\end{equation}
This design has the following issues:
\begin{enumerate}[nosep, leftmargin=*]
  \item \textbf{Score computation:} Hamilton-product attention computes all four components, requiring 16 real-valued products per quaternion pair. In contrast, our method employs a single shared score $\mathrm{Re}(\mathbf{Q}\otimes\mathbf{K}^{\dagger})$, which is implemented as the sum of only four real matrix products (\cref{eq:ours_score_matrix}).
  
  \item \textbf{Softmax:} Prior methods apply softmax independently to four component-wise score matrices. In contrast, our approach simplifies this by applying a single softmax function to the shared real-valued score.
  
  \item \textbf{Attention maps:} Component-wise attention learns distinct maps $\mathbf{A}_0,\mathbf{A}_1,\mathbf{A}_2,\mathbf{A}_3$, which allow the alignment to diverge. Our method mitigates this by sharing a single attention map across components, thereby preserving consistent quaternion coupling.
\end{enumerate}

Table~\ref{tab:attn_complexity} summarizes the computational costs per head. Our method reduces both the number of real-valued multiplications for score computation and the number of softmax operations by 75\%.

\begin{table}[t]
\centering
\begin{tabular}{lcc}
\toprule
Method & Real multiplications & softmax \\
\midrule
\citet{tay-etal-2019-lightweight} & 16 & 4 \\
Ours & 4 & 1 \\
\bottomrule
\end{tabular}
\caption{Per-head computational cost for attention-score computation in quaternion self-attention.
``Real multiplications'' denotes the number of real-valued multiplications per quaternion pair used in the query--key interaction,
and ``softmax'' denotes the number of independent softmax operations.}
\label{tab:attn_complexity}
\end{table}

\subsection{Query-Key Normalization}
\label{subsec:qk_norm}
To control the scale of attention scores and preserve the structural integrity of quaternion features, we apply Quaternion RMSNorm (QRMSNorm) to the queries and keys. For a quaternion vector $\mathbf{q} = (q^{(1)}, \ldots, q^{(d)})^T \in \mathbb{H}^{d}$, we define the normalization for each component $q^{(\ell)} = q_0^{(\ell)} + q_1^{(\ell)}{\sf i} + q_2^{(\ell)}{\sf j} + q_3^{(\ell)}{\sf k}$ as
\begin{equation}
  \mathrm{QRMSNorm}(q^{(\ell)}) = \frac{q^{(\ell)}}{\mathrm{RMS}(q^{(\ell)})} \,\gamma^{(\ell)},
\end{equation}
where 
\begin{equation}
  \small
  \mathrm{RMS}(q^{(\ell)}) =
  \sqrt{\frac{1}{4}\left((q_0^{(\ell)})^2 + (q_1^{(\ell)})^2 + (q_2^{(\ell)})^2 + (q_3^{(\ell)})^2\right) + \epsilon},
\end{equation}
and $\gamma^{(\ell)} \in \mathbb{R}$ is a learnable scalar gain. This normalization treats the four quaternion components as a single unit and stabilizes the attention scores by aligning the scales of the queries and keys.

\section{Experiments}

In this section, we evaluate the proposed shared-score quaternion attention mechanism by addressing three questions: (i) whether it matches the component-wise baseline in enhancement quality, (ii) whether it improves efficiency as measured by the RTF, and (iii) how quaternion models compare with real-valued baselines in parameter efficiency. Detailed training configurations, including short-time Fourier transform (STFT) parameters, optimizer settings, and RTF measurement protocol, are provided in Appendix~\ref{app:training_config}.

\begin{table*}[t]
\centering
\small
\setlength{\tabcolsep}{3pt}
\resizebox{\textwidth}{!}{%
\begin{tabular}{llccccccc}
\toprule
Method & Attention Type & Params (M) & PESQ & CSIG & CBAK & COVL & STOI & SI-SDR \\
\midrule
\multicolumn{9}{l}{\textit{Comparable Baselines (VoiceBank+DEMAND)}} \\
\midrule
Noisy & -- & -- & 1.97 & 3.35 & 2.44 & 2.63 & 0.91 & -- \\
SEGAN~\cite{pascual2017segan} & -- & 97.47 & 2.16 & 3.48 & 2.94 & 2.80 & 0.92 & -- \\
TSTNN~\cite{Wang2021TSTNN} & -- & 0.92 & 2.96 & 4.10 & 3.77 & 3.52 & 0.95 & -- \\
MetricGAN+~\cite{Fu2021MetricGANPlus} & -- & -- & 3.15 & 4.14 & 3.18 & 3.64 & 0.94 & -- \\
SE-Conformer~\cite{Kim2021}           & Standard & -- & 3.13 & 4.45 & 3.55 & 3.82 & 0.95 & -- \\
CMGAN~\cite{Cao2022}                  & Standard & 1.83 & 3.41 & 4.63 & 3.94 & 4.12 & 0.96 & -- \\
DCCRN~\cite{Hu2020DCCRN}              & -- & 3.07 & 2.59 & -- & -- & -- & 0.94 & 18.74 \\
NSE-CATNet~\cite{nse_catnet2023}        & Standard & 3.57 & 3.19 & 4.41 & 3.66 & 3.82 & 0.96 & -- \\
Real-valued Conformer                 & Standard & 3.19 & 3.25 & 4.43 & 3.68 & 3.88 & 0.95 & 19.09 \\
\midrule
\multicolumn{9}{l}{\textit{Quaternion}} \\
\midrule
QDenseNet & -- (No Bottleneck) & 0.74 & 2.80\,{\scriptsize(2.75--2.85)} & 3.84 & 3.41 & 3.32 & 0.94 & 18.42\,{\scriptsize (18.12--18.73)} \\
QTN~\cite{Yang2023_QTN} & QConv-based attention & 0.63 & 2.76\,{\scriptsize (2.71--2.81)} & 3.76 & 3.43 & 3.26 & 0.94 & 19.55\,{\scriptsize (19.29--19.82)} \\
QTransformer~\cite{tay-etal-2019-lightweight} & Hamilton & 0.62 & 3.07\,{\scriptsize (3.02--3.12)} & 4.26 & 3.61 & 3.68 & \textbf{0.95} & 19.62\,{\scriptsize (19.31--19.92)} \\
QTransformer (Ours) & Shared Score & 0.62 & 3.07\,{\scriptsize (3.03--3.12)} & 4.30 & 3.62 & 3.71 & \textbf{0.95} & \textbf{19.69}\,{\scriptsize (19.45--19.94)} \\
QConformer~\cite{tay-etal-2019-lightweight} & Hamilton & 0.80 & 3.11\,{\scriptsize (3.06--3.15)} & 4.30 & 3.64 & 3.73 & \textbf{0.95} & 19.64\,{\scriptsize (19.39--19.90)} \\
QConformer (Ours) & Shared Score & 0.80 & \textbf{3.18}\,{\scriptsize (3.13--3.22)} & \textbf{4.36} & \textbf{3.65} & \textbf{3.79} & \textbf{0.95} & 19.36\,{\scriptsize (19.08--19.66)} \\
\midrule
\midrule
\multicolumn{9}{l}{\textit{Reference (Different Training Data / Foundation Models)}} \\
\midrule
DeepFilterNet3~\cite{schroeter2023deepfilternet3} & -- & -- & 3.17 & 4.34 & 3.61 & 3.77 & 0.94 & -- \\
UNIVERSE++~\cite{universe_plus} & -- & 107.5 & 3.02 & -- & -- & -- & 0.86 & 18.62 \\
\bottomrule
\end{tabular}}
\caption{Results of speech enhancement on VoiceBank+DEMAND. For PESQ and SI-SDR, we report the 95\% bootstrap confidence intervals in parentheses (low--high). In the Attention Type column, ``Standard'' refers to real-valued attention. \textbf{Bold} denotes the best result among quaternion models. The bottom section lists recent models for reference, as they are trained on significantly larger datasets (e.g., DNS4) and employ foundation models.}
\label{tab:vb_results}
\end{table*}

\begin{table*}[t]
\centering
\setlength{\tabcolsep}{3pt}
\resizebox{\textwidth}{!}{%
\begin{tabular}{llccccccc}
\toprule
Method & Attention Type & Params (M) & OVRL & SIG & BAK & P.808 & GPU RTF & CPU RTF \\
\midrule
Noisy & -- & -- & 2.11 & 2.89 & 2.34 & 2.90 & -- & -- \\
Real-valued Conformer & Standard & 3.19 & 2.77 & 3.12 & 3.77 & 3.41 & 0.0071 & -- \\
\midrule
\multicolumn{9}{l}{\textit{Quaternion Models (Ours vs.\ \citet{tay-etal-2019-lightweight} Attention)}} \\
\midrule
QTransformer~\cite{tay-etal-2019-lightweight} & Hamilton & 0.62 & 2.61\,{\scriptsize(2.56--2.65)} & 3.07\,{\scriptsize(3.02--3.11)} & 3.44\,{\scriptsize(3.39--3.49)} & 3.30\,{\scriptsize(3.26--3.33)} & 0.0192 & 0.594 \\
QTransformer (Ours) & Shared Score & 0.62 & 2.67\,{\scriptsize(2.62--2.71)} & 3.06\,{\scriptsize(3.01--3.10)} & \textbf{3.61}\,{\scriptsize(3.56--3.66)} & \textbf{3.36}\,{\scriptsize(3.32--3.40)} & \textbf{0.0107} & \textbf{0.249} \\
\midrule
QConformer~\cite{tay-etal-2019-lightweight} & Hamilton & 0.80 & 2.67\,{\scriptsize(2.62--2.71)} & 3.08\,{\scriptsize(3.03--3.12)} & 3.57\,{\scriptsize(3.52--3.62)} & 3.33\,{\scriptsize(3.29--3.36)} & 0.0202 & 0.610 \\
QConformer (Ours) & Shared Score & 0.80 & \textbf{2.69}\,{\scriptsize(2.65--2.74)} & \textbf{3.11}\,{\scriptsize(3.06--3.15)} & 3.57\,{\scriptsize(3.51--3.61)} & 3.32\,{\scriptsize(3.29--3.36)} & \textbf{0.0157} & \textbf{0.259} \\
\bottomrule
\end{tabular}}
\caption{Speech enhancement results on the DNS-Challenge 3 dataset. In the Attention Type column, ``Standard'' refers to real-valued attention. We report the 95\% bootstrap confidence intervals over test utterances in parentheses for DNSMOS metrics (low--high). The best results among quaternion models with the same parameter count are shown in \textbf{bold}. Under matched implementation conditions, our method reduces RTF by up to 44.3\% on GPU and 58.1\% on CPU compared to the quaternion baseline~\cite{tay-etal-2019-lightweight}, approaching real-valued efficiency. GPU RTF was measured on an NVIDIA A100 80GB; CPU RTF on an Intel Xeon Gold 6342.}
\label{tab:dns_results}
\end{table*}

\subsection{Datasets}
To evaluate the effectiveness and scalability of the proposed method, we employed two standard benchmarks for speech enhancement. VoiceBank+DEMAND~\cite{valentinibotinhao16_ssw} was employed to assess the effectiveness of quaternion-based speech enhancement (SE) through fair comparison with existing methods, whereas the DNS-Challenge 3 dataset~\cite{dns3} was used to evaluate the generalization performance and RTF under large-scale, diverse noise conditions.

\textbf{VoiceBank+DEMAND}~\cite{valentinibotinhao16_ssw}: We employed 11,572 utterances from 28 speakers in the VoiceBank corpus~\cite{veaux2013voicebank} for training and 824 utterances from 2 unseen speakers for testing. Noise was selected from the DEMAND database~\cite{thiemann2013demand} and mixed at SNRs of 0--15 dB during training, and at SNRs of 2.5--17.5 dB with unseen noise types during testing. All audio files were sampled at 16 kHz.

\textbf{DNS-Challenge 3 dataset}~\cite{dns3}: This dataset was used to evaluate the generalization performance and RTF under large-scale, diverse noise conditions. The training data comprised 760 h of clean speech, 181 h of noise, and approximately 118,000 room impulse responses. The test set included multilingual utterances spanning various SNR and reverberation conditions. All audio was sampled at 16 kHz.

\subsection{Speech Enhancement Model}
We applied the proposed shared-score quaternion self-attention mechanism to single-channel speech enhancement.

\subsubsection{Quaternion Feature Representation}
We applied the STFT to the input waveform $x(t)$ and denoted the complex spectrum at time frame $t$ and frequency bin $f$ as $X_{t,f} = \mathrm{Re}[X_{t,f}] + \mathrm{i}\,\mathrm{Im}[X_{t,f}] \in \mathbb{C}$. Following CMGAN~\cite{Cao2022}, we adopted the amplitude, real part, and imaginary part as the three feature components. Following ~\citet{Parcollet2018,Muppidi2021}, we constructed a pure imaginary quaternion
\begin{equation}
Q_{t,f} = 0 + |X_{t,f}| \, {\sf i} + \mathrm{Re}[X_{t,f}] \, {\sf j} + \mathrm{Im}[X_{t,f}] \, {\sf k},
\label{eq:quaternion_feature}
\end{equation}
where $|X_{t,f}| = \sqrt{\mathrm{Re}(X_{t,f})^2 + \mathrm{Im}(X_{t,f})^2}$ is the amplitude spectrum.

\subsubsection{Architecture and Training}
The model adopted an encoder-bottleneck-decoder architecture (see Figure~\ref{fig:model-architecture} in Appendix~\ref{app:experiments}). The encoder and decoder consisted of a QDilated DenseNet with dilated convolutions. The bottleneck stacked $N$ layers of the quaternion Transformer or quaternion Conformer using our proposed method ($N=2$ and $\text{heads}=4$ in our experiments). Self-attention was applied along both the time and frequency dimensions to capture the global dependencies. For training, we employed a composite loss function combining multi-resolution STFT, SI-SDR, complex L1, and PESQ losses. The details are provided in Appendix~\ref{app:training_loss}. 

\subsection{Evaluation Metrics}
We used the following metrics to evaluate speech enhancement performance. For VoiceBank+DEMAND, we report PESQ~\cite{PESQ} (perceptual quality), CSIG, CBAK, COVL~\cite{Hu2006} (predictors of subjective ratings), STOI~\cite{STOI} (intelligibility), and SI-SDR~\cite{SISDR} (scale-invariant SDR). For DNS-Challenge, we report DNSMOS P.835~\cite{dnsmos} (SIG, BAK, OVRL) and P.808. Computational efficiency was evaluated using RTF measured waveform-to-waveform on a single NVIDIA A100 80GB GPU and an Intel Xeon Gold 6342 CPU over 448 utterances (6.86k s) under matched implementation conditions for both methods. Furthermore, a detailed comparison of the computational complexity specific to the attention mechanism is provided in Appendix~\ref{app:flops}.

\subsection{Baselines and Ablations}
Our primary comparison target is the quaternion attention of \citet{tay-etal-2019-lightweight}. The real-valued Conformer is included as a reference point for measuring the parameter efficiency of quaternion representations, indicating whether the structural bias of quaternion models enables comparable quality with fewer parameters. For the VoiceBank+DEMAND dataset, we additionally evaluated QTN~\cite{Yang2023_QTN} to assess the effectiveness of query, key, and value computation versus quaternion convolutions, and a pure QDenseNet without the attention bottleneck to quantify the specific contribution of the self-attention layers.

\subsection{Results}
\textbf{VoiceBank+DEMAND.}
Table~\ref{tab:vb_results} demonstrates that our proposed method maintains restoration quality comparable to the quaternion baseline~\cite{tay-etal-2019-lightweight} despite using a simplified attention mechanism.
Specifically, our QConformer achieved a PESQ of 3.18, demonstrating consistent performance across CSIG and COVL and STOI close to the baseline.
For SI-SDR, the shared-score approximation preserved signal fidelity within 0.3 dB.
Compared to the real-valued Conformer, it retained 98\% of the PESQ with only 25\% of the parameters.
Furthermore, it matches NSE-CATNet~\cite{nse_catnet2023} with $4.5\times$ fewer parameters and outperforms SE-Conformer and MetricGAN+.
Although CMGAN~\cite{Cao2022} achieves higher PESQ, our method provides a favorable efficiency trade-off without requiring adversarial training.
Finally, it substantially outperformed QTN and QDenseNet, validating the effectiveness of Hamilton-product-based global attention.
For a qualitative comparison of spectral reconstruction, spectrogram visualizations are provided in Appendix~\ref{app:qualitative_analysis}.

\textbf{DNS-Challenge 3 Dataset and Efficiency.}
In DNS-Challenge 3 (Table~\ref{tab:dns_results}), our method matched the baseline in quality while significantly reducing the runtime.
Under matched implementation conditions, the GPU RTF decreased by 44.3\% for our QTransformer ($1.79\times$ faster) and by 22.3\% for our QConformer ($1.29\times$ faster); on the CPU, the reductions were 58.1\% ($2.39\times$) and 57.5\% ($2.36\times$), respectively.
These gains can be attributed to the reduced score computation (16$\rightarrow$4 matmuls) and a single softmax, narrowing the efficiency gap to real-valued models.

\textbf{Summary.}
Our method matches quaternion baselines in performance while substantially reducing the computational overhead and demonstrating superior parameter efficiency compared to real-valued models.
These results indicate that, under quaternion linear projections, the additional expressiveness of independent component-wise softmax is unnecessary, positioning our shared-score mechanism as an efficient alternative.


\section{Discussion: Why Does a Shared Score Suffice?}
\label{sec:discussion}

\begin{figure}[t]
    \centering
    \includegraphics[width=\columnwidth]{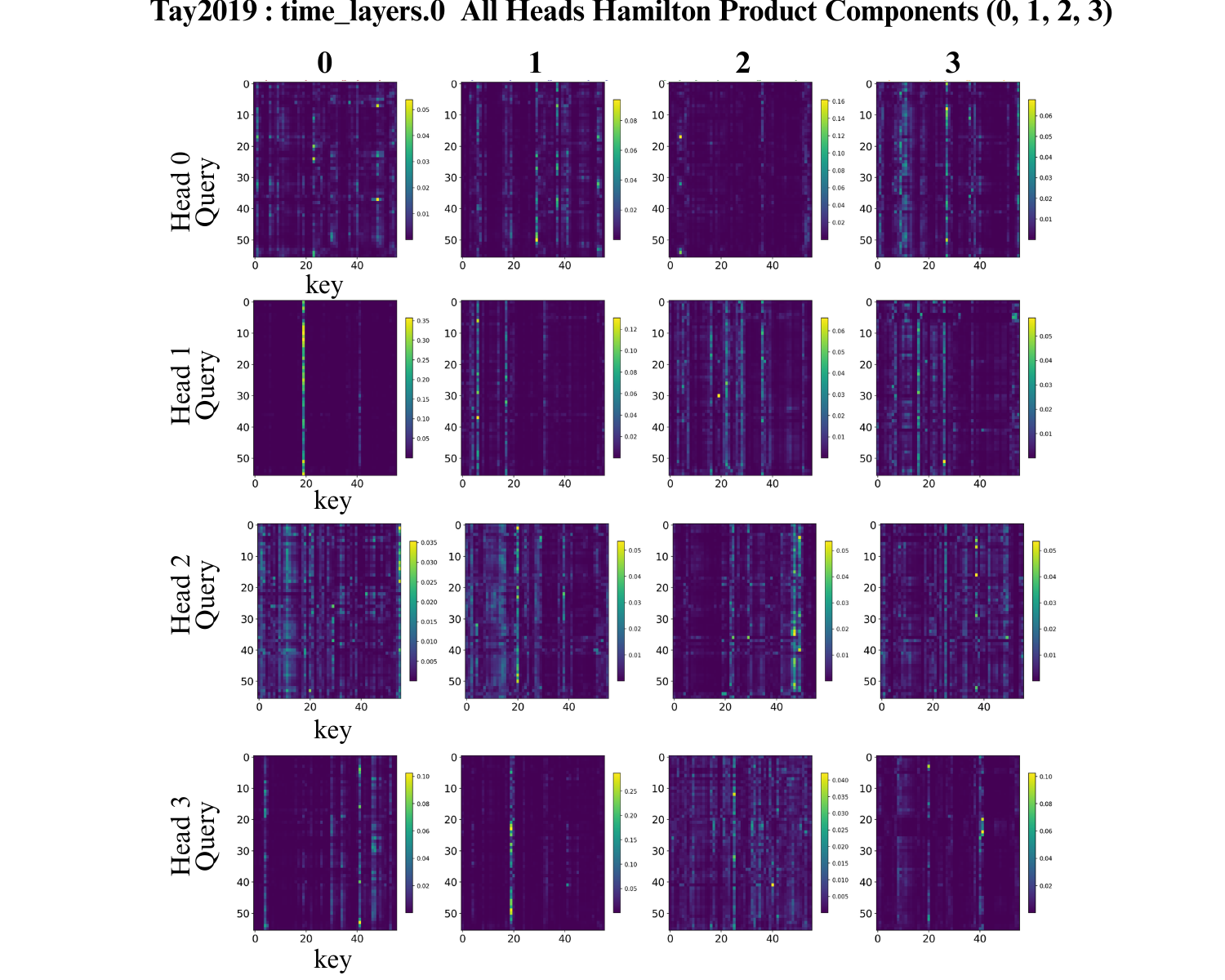}
    \caption{Visualization of the first layer in the QTransformer bottleneck. The attention mechanism of \citet{tay-etal-2019-lightweight} attends to different positions for each component.}
    \label{fig:time-layer-plot}
\end{figure}

We propose a shared-score quaternion attention mechanism that employs a single scalar score (the quaternion inner product) to produce one attention distribution shared across components, achieving comparable or better performance while reducing the GPU RTF by up to 44.3\% and the CPU RTF by up to 58.1\%.
In this section, we analyze the VoiceBank+DEMAND models to explain why sharing a single score is sufficient and to reveal the redundancy in Hamilton-product attention.

\begin{figure*}[t]
    \centering
    \includegraphics[width=0.8\textwidth]{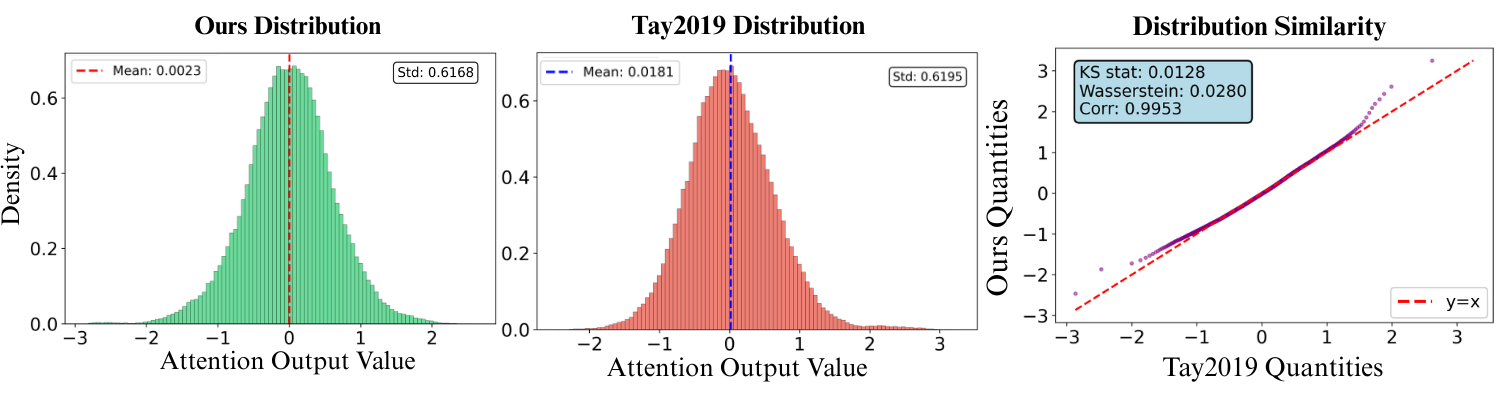}
    \caption{From left to right: Visualization of the attention output (flattened $\mathbf{O^{\mathrm{ours}}}$ values) distribution of the proposed method, the output distribution of \citet{tay-etal-2019-lightweight} (flattened $\mathbf{O^{\mathrm{Tay}}}$ values), and the quantile correlation between the two distributions. The results demonstrate a high degree of similarity, with a Kolmogorov--Smirnov (KS) statistic of 0.0128, a Wasserstein distance of 0.028, and an extremely high quantile correlation of 0.995.}
    \label{fig:similarity}
\end{figure*}

\begin{figure*}[t]
    \centering
    \includegraphics[width=0.8\textwidth]{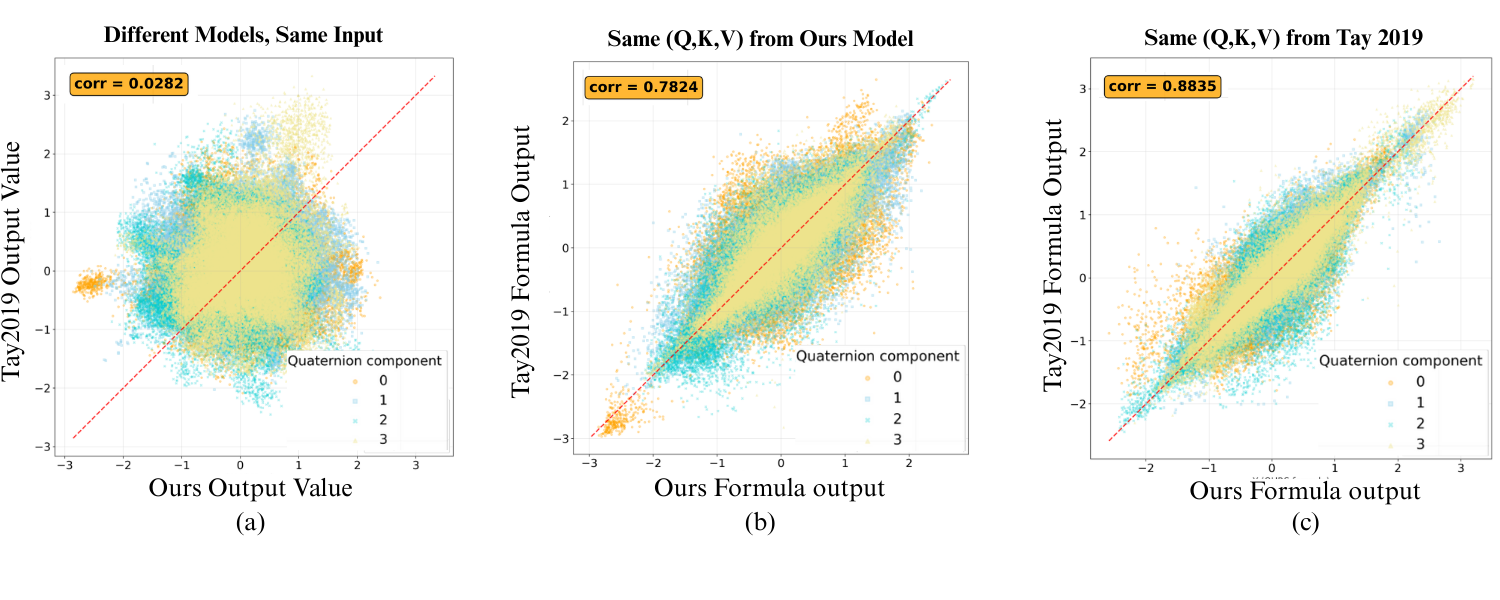}
    \caption{Correlation analysis of attention outputs. (a) Outputs from independently trained models on the same input show near-zero correlation ($\textbf{corr}=0.028$). (b,c) Applying both formulas to identical $(Q,K,V)$ yields high correlation ($\textbf{corr}=0.78$--$0.88$), indicating similar effective capacity despite different algebraic structures.}
    \label{fig:formula}
\end{figure*}

\subsection{Analysis of Inter-Component Agreement}
\label{subsec:consistency}

The key difference between the proposed method and the existing method~\cite{tay-etal-2019-lightweight} lies in the scope of the information used to compute the score matrix. In our method, the score matrix $\mathbf{S}$ is defined as the real part of the quaternion product and shares a single attention weight matrix across all components, yielding 100\% inter-component consistency by design. In contrast, \citet{tay-etal-2019-lightweight} maintained independent attention weights for each quaternion component $(0, 1, 2, 3)$, ensuring that it theoretically possesses higher expressiveness than the former.

Our quantitative analysis revealed how this ``expressiveness'' was utilized. To measure the inter-component agreement introduced by \citet{tay-etal-2019-lightweight}, we defined the agreement rate between the components as follows:
\begin{equation}
\begin{aligned}
  \mathrm{Agreement}(m, n) = \frac{1}{T} \sum_{t=1}^{T} \mathbf{1}\Big[ &\arg\max_s \mathbf{A}_m(t,s) \\
  &= \arg\max_s \mathbf{A}_n(t,s) \Big],
\end{aligned}
\label{eq:agreement}
\end{equation}
where $m, n \in \{0, 1, 2, 3\}$ denotes the quaternion components, $\mathbf{A}_m(t,s)$ is the attention weight from position $t$ to position $s$ for component $m$, and $T$ denotes the sequence length. In this analysis, four attention modules in the bottleneck (2 layers $\times$ \{time, frequency\}, 4 heads) of the trained QTransformer were utilized.

\begin{table}[t]
\centering
\begin{tabular}{lcc}
\toprule
Pair & Mean (\%) & Std (\%) \\
\midrule
$0$-$1$ & 5.64 & 13.69 \\
$0$-$2$ & 3.45 & 9.91 \\
$0$-$3$ & 5.62 & 17.22 \\
$1$-$2$ & 2.14 & 6.33 \\
$1$-$3$ & 4.16 & 12.36 \\
$2$-$3$ & 1.97 & 6.49 \\
\midrule
Overall & 3.83 & 6.58 \\
\bottomrule
\end{tabular}
\caption{Component-wise agreement rates in the Hamilton product attention~\cite{tay-etal-2019-lightweight}. Indices $0$--$3$ denote quaternion components $(q_0,q_1,q_2,q_3)$.
Agreement refers to the fraction of queries with matching argmax key positions between two components; the chance level is $\approx 1/T$ (0.63\% by random sampling in VoiceBank+DEMAND test set).
The per-layer/head details are provided in Appendix~\ref{app:agreement_detail}.}
\label{tab:agreement}
\end{table}

\cref{tab:agreement} shows the agreement rate for each component in the study by \citet{tay-etal-2019-lightweight}, and \cref{fig:time-layer-plot} shows the attention weight distribution of the first layer in time attention. As shown in \cref{tab:agreement}, the average agreement rate across all pairs was only 3.83\%, which is approximately six times higher than the random baseline (0.63\%) but significantly low in absolute terms. Thus, each component is focused on different positions.

Intuitively, component-wise independence might be expected to enhance representational power by capturing a diverse range of features. However, the comparable performance of our shared-score method suggests that this additional capacity for divergence is redundant for the task. As empirical performance alone is insufficient to justify redundancy, the next section presents theoretical and statistical analysis to explain this phenomenon.

\begin{table*}[!t]
\centering
\setlength{\tabcolsep}{5pt}
\resizebox{\textwidth}{!}{%
\begin{tabular}{llccccccc}
\toprule
\multirow{2}{*}{Domain} & \multirow{2}{*}{Metric} & \multicolumn{2}{c}{Quality} & \multirow{2}{*}{Efficiency Gain} & \multicolumn{3}{c}{Output Similarity} \\
\cmidrule(lr){3-4} \cmidrule(lr){6-8}
& & Tay et al. & Ours & & KS Stat. & Wasserstein & Quantile Corr. \\
\midrule
Speech (DNS-3, QTransformer) & OVRL & 2.61 & 2.67 & GPU RTF $44.3\%\downarrow$ ($1.79\times$) & 0.0128 & 0.0280 & 0.9953 \\
Vision (CIFAR-100)            & Acc. (\%) & 71.09 & 70.94 & Training time $1.40\times$ & 0.0151 & 0.0203 & 0.9975 \\
NLP (SST-2)                   & Acc. (\%) & 81.12 & 81.04 & Latency $26.8\%\downarrow$ ($1.37\times$) & 0.0407 & 0.0927 & 0.9961 \\
\bottomrule
\end{tabular}}
\caption{Cross-domain comparison of the results of the proposed shared-score attention against those of \citet{tay-etal-2019-lightweight}. Quality is preserved within seed-level variance across all three domains while wall-clock cost is consistently reduced. Output-distribution similarity (KS statistic, Wasserstein distance, quantile correlation) is computed on the full attention outputs (after softmax and $\mathbf{A V}$ multiplication) between independently trained models. The quantile correlation exceeds $0.995$ in every domain.}
\label{tab:cross_domain}
\end{table*}

\subsection{Verification of Sufficient Expressiveness}
\label{subsec:efficiency_capacity}

Next, we discuss the mathematical rationale for why scores based on the quaternion inner product are sufficient for task performance. The score of the proposed method (\cref{eq:ours_score_matrix}) is equivalent to the quaternion inner product (\cref{eq:quat_inner_hamilton}), corresponding to the Euclidean inner product when the quaternion vectors are expanded to $\tilde{\mathbf{q}}, \tilde{\mathbf{k}} \in \mathbb{R}^{4d}$ (\cref{eq:quat_inner_product}). Specifically, the proposed method employs a hybrid design that leverages the quaternion structure (weight sharing) for feature extraction while utilizing Euclidean similarities for alignment. Our experimental results confirm that this design can produce outputs equivalent to those of existing methods.

\begin{enumerate}[nosep, leftmargin=*]
    \item \textbf{Distributional similarity:}
    The attention output distributions of our method and that of~\citet{tay-etal-2019-lightweight} exhibit high similarity (KS statistic: 0.0128, Wasserstein distance: 0.028, quantile correlation: 0.995; \cref{fig:similarity}), confirming that projection onto the real part preserves the dynamic range required by subsequent layers.
    
    \item \textbf{Approximation of attention outputs:}
    We next ask whether the two formulas, as mathematical functions, compute similar mappings. To isolate this question from any difference in learned representations, we extract $(Q, K, V)$ tensors from a trained model and pass them through both formulas. The resulting outputs are highly correlated: 0.78 when $(Q, K, V)$ is taken from our trained model, and 0.88 when taken from the trained model of \citet{tay-etal-2019-lightweight} (\cref{fig:formula}(b),(c)).

    \item \textbf{Independence of learned representations:}
    When trained independently, the two methods showed near-zero correlation (Pearson $r \approx 0.028$; \cref{fig:formula}(a)), indicating convergence to distinct, yet functionally equivalent solutions.
\end{enumerate}

These three results collectively demonstrate the following: Points 1 and 2 indicate that, despite the computational difference of the four-component independent softmax, its impact on the final output is limited. Point 3 shows that both methods achieve comparable performance while learning different representations, indicating that component-wise independent attention distributions are not essential for task performance. Therefore, the expressiveness of the existing method, specifically the ability to ``attend to different positions for each component,'' is largely superfluous.

\subsection{Implications for Quaternion Self-Attention Design}
\label{subsec:implications}

The findings of this study are critical for the design of quaternion self-attention mechanisms. Various QNN architectures aim to preserve the Hamilton-product algebra throughout the computational graph. Our results indicate that, in architectures where queries and keys are produced by quaternion linear projections that already induce component pre-mixing, strictly maintaining component-wise hypercomplex operations is unnecessary for attention-score computations.

The redundancy observed in this study is based on the strong inter-component coupling inherent in the quaternion linear layers. As detailed in Appendix~\ref{sec:qlinear}, a quaternion linear layer generates all four output components from a single shared weight matrix using the Hamilton product. Consequently, the resulting query $\mathbf{Q}$ and key $\mathbf{K}$ vectors are dense linear combinations of all the input components.

Our theoretical analysis (Appendix~\ref{sec:theoretical_results}) clarifies that prior coupling constrains the effective expressiveness of subsequent attention scores. Specifically, we prove that the four component-wise score matrices in the existing method and the single shared score in our method are generated from the same interaction subspace $\mathcal{U}(\mathbf{W}_Q, \mathbf{W}_K)$ (\cref{thm:score_decomposition}). Furthermore, because the components are already entangled by linear projections, separating the attention scores into four independent components does not expand the underlying feature interaction space; rather, it merely provides redundant parameterization within the same subspace.

This theoretical constraint also manifests empirically in the optimization dynamics. Our gradient-norm correlation analysis (Appendix~\ref{app:gradient_independence}) reveals that inter-component gradient norms, which are nearly independent at random initialization, become substantially correlated after training (mean off-diagonal: 0.68). Thus, even when given independently learnable degrees of freedom, the optimization tends to couple the update magnitudes across components, consistent with the structural redundancy predicted by our analysis.

Therefore, we suggest that component interactions should be consolidated within the linear transformations.
This motivates a ``separation of concerns'' principle: Hamilton products are employed in linear layers for structured feature coupling, while efficient quaternion inner products are used in attention to achieve consistent alignment. This design provides a practical balance between hypercomplex structural bias and computational efficiency.

\subsection{Cross-Domain Validation}
\label{sec:cross_domain}
To verify that the redundancy claim is not specific to speech enhancement, we conducted additional experiments on image classification (CIFAR-100~\cite{krizhevsky2009learning}) and text classification (SST-2~\cite{socher2013recursive}) under matched architecture capacity, batch size, and learning-rate scaling. The detailed configurations are presented in Appendix~\ref{sec:appendix_cifar} (CIFAR-100) and Appendix~\ref{sec:appendix_sst2} (SST-2).

Across all three domains, our shared-score method preserves task quality within seed-level variance while consistently reducing wall-clock cost (\cref{tab:cross_domain}). On CIFAR-100, the accuracy is $70.94 \pm 0.24\%$ versus $71.09 \pm 0.34\%$ for the model of \citet{tay-etal-2019-lightweight}, with a $1.40\times$ training-time speedup. On SST-2, the accuracy is $81.04 \pm 0.30\%$ versus $81.12 \pm 0.70\%$, while inference latency is reduced by $26.8\%$ ($1.37\times$ faster).

We further extended the output-distribution comparison from \cref{fig:similarity} to all three domains. The right block of \cref{tab:cross_domain} reports the KS statistic, Wasserstein distance, and quantile correlation between the full attention outputs (including softmax and $\mathbf{A V}$ multiplication) of the two formulations. The quantile correlation exceeds $0.995$ in every domain, indicating that the shapes of the two output distributions are nearly identical. The KS statistic and Wasserstein distance are slightly larger for NLP, but their absolute values remain small.

Taken together, these results indicate that the structural redundancy identified for component-wise attention on speech enhancement also holds across vision and NLP. The additional degrees of freedom of component-wise scoring do not translate into meaningfully different output distributions or task performance under the pre-mixing condition induced by quaternion linear projections (\cref{thm:score_decomposition}). This supports the generality of the shared-score design across the three validated domains.

\section{Conclusion}
We investigated the necessity of component-wise independent attention in quaternion neural networks.
Under the pre-mixing condition induced by quaternion linear projections, our analyses revealed that this independence is structurally redundant.
Consequently, we proposed a shared-score mechanism utilizing the quaternion inner product, achieving up to 44.3\% GPU and 58.1\% CPU RTF reductions with comparable quality, and consistent trends across vision and NLP classification tasks.

Several limitations remain. The current evidence supports practical sufficiency only in the tested regimes; we have not validated larger models, longer-context tasks, or architectures without quaternion pre-mixing projections. More challenging robustness settings such as extreme reverberation, out-of-domain noise, and multi-speaker scenarios, as well as edge-device deployment and kernel-level runtime breakdowns, also remain tasks for the future. Beyond these, the theoretical scope of shared-score attention merits further study, particularly for tasks that require component-wise alignment (e.g., multimodal fusion or 3D rotation estimation). We will also explore hardware-aware implementations (e.g., FlashAttention~\cite{flasshattention2022}) and the early-stage optimization dynamics of component coupling.

\section*{Impact Statement}
This study proposes an efficient quaternion self-attention mechanism with applications in speech enhancement. Reducing the computational cost can facilitate deployment on resource-constrained devices, potentially improving the accessibility of speech enhancement technology.
As is the case for various machine learning methods, potential risks include misuse (e.g., altering or fabricating audio) and biases inherited from the training data.
Therefore, we do not anticipate any additional negative societal impact specific to our method beyond these general considerations.

\section*{Acknowledgements}
We thank the anonymous reviewers for their constructive comments, which substantially improved the paper.

\nocite{langley00}

\bibliography{main}
\bibliographystyle{icml2026}

\newpage
\appendix
\onecolumn

\section{Quaternion Neural Network Layers}
\label{sec:qnn_layers}

This section defines the quaternion neural network layers used in the proposed model.
All layers possess a parameter-sharing structure based on the Hamilton product,
which enforces strong coupling among the four components and achieves
the same input-output dimensionality with approximately $1/4$ of the parameters compared to real-valued networks.

\subsection{Quaternion Linear Layer}
\label{sec:qlinear}

The quaternion linear transformation layer operates on an input sequence $\mathbf{X} \in \mathbb{H}^{T \times d_{\mathrm{in}}}$ and
a quaternion weight matrix $\mathbf{W} \in \mathbb{H}^{d_{\mathrm{in}} \times d_{\mathrm{out}}}$ as follows:
\begin{equation}
    \mathbf{Y} = \mathbf{X} \otimes \mathbf{W}.
\end{equation}
Each element is expressed as
\begin{align}
    \mathbf{X} &= \mathbf{X}_0 + \mathbf{X}_1 {\sf i} + \mathbf{X}_2 {\sf j} + \mathbf{X}_3 {\sf k}, \\
    \mathbf{W} &= \mathbf{W}_0 + \mathbf{W}_1 {\sf i} + \mathbf{W}_2 {\sf j} + \mathbf{W}_3 {\sf k},
\end{align}
where $\mathbf{X}_0, \mathbf{X}_1, \mathbf{X}_2, \mathbf{X}_3 \in \mathbb{R}^{T \times d_{\mathrm{in}}}$ and
$\mathbf{W}_0, \mathbf{W}_1, \mathbf{W}_2, \mathbf{W}_3 \in \mathbb{R}^{d_{\mathrm{in}} \times d_{\mathrm{out}}}$.

Based on the Hamilton product, each output component is computed as follows:
\begin{align}
    \mathbf{Y}_0 &= \mathbf{X}_0\mathbf{W}_0 - \mathbf{X}_1\mathbf{W}_1 - \mathbf{X}_2\mathbf{W}_2 - \mathbf{X}_3\mathbf{W}_3, \\
    \mathbf{Y}_1 &= \mathbf{X}_0\mathbf{W}_1 + \mathbf{X}_1\mathbf{W}_0 + \mathbf{X}_2\mathbf{W}_3 - \mathbf{X}_3\mathbf{W}_2, \\
    \mathbf{Y}_2 &= \mathbf{X}_0\mathbf{W}_2 - \mathbf{X}_1\mathbf{W}_3 + \mathbf{X}_2\mathbf{W}_0 + \mathbf{X}_3\mathbf{W}_1, \\
    \mathbf{Y}_3 &= \mathbf{X}_0\mathbf{W}_3 + \mathbf{X}_1\mathbf{W}_2 - \mathbf{X}_2\mathbf{W}_1 + \mathbf{X}_3\mathbf{W}_0.
\end{align}

Focusing on a single position (or feature dimension), let the input quaternion components be
$\mathbf{x} = [x_0, x_1, x_2, x_3]^\top$ and the output be $\mathbf{y} = [y_0, y_1, y_2, y_3]^\top$.
The quaternion linear transformation can be expressed in real matrix form as
\begin{equation}
    \mathbf{y} =
    \begin{bmatrix}
        w_0 & -w_1 & -w_2 & -w_3 \\
        w_1 &  w_0 &  w_3 & -w_2 \\
        w_2 & -w_3 &  w_0 &  w_1 \\
        w_3 &  w_2 & -w_1 &  w_0
    \end{bmatrix}
    \begin{bmatrix}
        x_0 \\ x_1 \\ x_2 \\ x_3
    \end{bmatrix}.
\end{equation}
This matrix structure ensures that the four components are not transformed independently but are mutually coupled through shared weights $(w_0, w_1, w_2, w_3)$.
While a real-valued linear layer requires $4d_{\mathrm{in}} \times 4d_{\mathrm{out}}$ parameters,
the quaternion linear transformation layer requires only $4d_{\mathrm{in}} \times d_{\mathrm{out}}$ parameters,
that is, one quarter of the parameters, a 75\% reduction in the parameter count while simultaneously transforming all four components for the same input-output dimensionality.

\subsection{Quaternion Convolution Layer}
\label{sec:qconv}

The quaternion convolution layer extends the Hamilton product structure to convolutional kernels~\cite{Gaudet2018}.
Consider an input feature map $\mathbf{X} \in \mathbb{H}^{C_{\mathrm{in}} \times H \times W}$ and a quaternion kernel $\mathbf{W} \in \mathbb{H}^{C_{\mathrm{out}} \times C_{\mathrm{in}} \times k \times k}$, each component of the output $\mathbf{Y} \in \mathbb{H}^{C_{\mathrm{out}} \times H' \times W'}$ is computed as follows:
\begin{align}
    \mathbf{Y}_r &= \mathbf{X}_r * \mathbf{W}_r - \mathbf{X}_i * \mathbf{W}_i - \mathbf{X}_j * \mathbf{W}_j - \mathbf{X}_k * \mathbf{W}_k, \\
    \mathbf{Y}_i &= \mathbf{X}_r * \mathbf{W}_i + \mathbf{X}_i * \mathbf{W}_r + \mathbf{X}_j * \mathbf{W}_k - \mathbf{X}_k * \mathbf{W}_j, \\
    \mathbf{Y}_j &= \mathbf{X}_r * \mathbf{W}_j - \mathbf{X}_i * \mathbf{W}_k + \mathbf{X}_j * \mathbf{W}_r + \mathbf{X}_k * \mathbf{W}_i, \\
    \mathbf{Y}_k &= \mathbf{X}_r * \mathbf{W}_k + \mathbf{X}_i * \mathbf{W}_j - \mathbf{X}_j * \mathbf{W}_i + \mathbf{X}_k * \mathbf{W}_r,
\end{align}
where the subscripts $r, i, j, k$ denote the real and imaginary parts, and $*$ denotes the standard real-valued convolution operation.

Similar to the quaternion linear layer, this structure enforces strong coupling among the four components compared to real-valued networks.
Although a real-valued convolution layer with equivalent dimensions (i.e., $4C_{\mathrm{in}}$ input and $4C_{\mathrm{out}}$ output channels) requires $16 C_{\mathrm{in}} C_{\mathrm{out}} k^2$ parameters, the quaternion convolution layer achieves the same dimensionality with $4 C_{\mathrm{in}} C_{\mathrm{out}} k^2$ parameters. This yields a $75\%$ reduction in the parameter count.


\section{Proof of Inner Product Properties}
\label{app:proof_inner_product}

\subsection{Main Properties}
\label{subsec:main_properties}

\begin{proposition}
\label{prop:inner_product}
Based on the quaternion scalar product $x \cdot y$ defined in \cref{eq:quat_scalar_product_def},
we define the inner product for quaternion vectors $\mathbf{q}, \mathbf{k} \in \mathbb{H}^{d}$ as
\begin{equation}
\label{eq:quat_vector_inner_product_app}
\langle \mathbf{q}, \mathbf{k} \rangle
:= \sum_{\ell=1}^{d} q^{(\ell)} \cdot k^{(\ell)}
\in \mathbb{R},
\end{equation}
where $q^{(\ell)}, k^{(\ell)} \in \mathbb{H}$ denote the $\ell$-th quaternion components.
Then, $\langle \cdot, \cdot \rangle$ satisfies the following properties:
\begin{enumerate}
  \item \textbf{Positive definiteness}: $\langle \mathbf{q}, \mathbf{q} \rangle \ge 0$, with equality if and only if $\mathbf{q} = \mathbf{0}$.
  \item \textbf{Symmetry}: $\langle \mathbf{q}, \mathbf{k} \rangle = \langle \mathbf{k}, \mathbf{q} \rangle$.
  \item \textbf{$\mathbb{R}$-bilinearity}: For any $\alpha, \beta \in \mathbb{R}$,
  \begin{align}
  \label{eq:bilinear_app}
  \langle \alpha\mathbf{q}_1 + \beta\mathbf{q}_2, \mathbf{k} \rangle
  &= \alpha \langle \mathbf{q}_1, \mathbf{k} \rangle + \beta \langle \mathbf{q}_2, \mathbf{k} \rangle, \\
  \langle \mathbf{q}, \alpha\mathbf{k}_1 + \beta\mathbf{k}_2 \rangle
  &= \alpha \langle \mathbf{q}, \mathbf{k}_1 \rangle + \beta \langle \mathbf{q}, \mathbf{k}_2 \rangle.
  \notag
  \end{align}
\end{enumerate}
\end{proposition}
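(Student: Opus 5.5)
The plan is to reduce everything to the explicit real-coordinate formula already recorded in \cref{eq:quat_scalar_product_def}, namely $x\cdot y = x_0y_0+x_1y_1+x_2y_2+x_3y_3$. With this formula in hand, $\langle\mathbf{q},\mathbf{k}\rangle$ in \cref{eq:quat_vector_inner_product_app} is exactly the Euclidean dot product of the real expansions $\tilde{\mathbf{q}},\tilde{\mathbf{k}}\in\mathbb{R}^{4d}$:
\[
\langle \mathbf{q},\mathbf{k}\rangle = \sum_{\ell=1}^{d}\sum_{\alpha=0}^{3} q_\alpha^{(\ell)}k_\alpha^{(\ell)} = \tilde{\mathbf{q}}^\top\tilde{\mathbf{k}}.
\]
All three properties then follow from the corresponding properties of the standard dot product.

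First I would verify the coordinate formula from the Hamilton product, so that the argument does not rest only on the stated equality. Write $y^* = y_0 - y_1{\sf i}-y_2{\sf j}-y_3{\sf k}$ and use the product formula to read off the real part:
\[
\mathrm{Re}(x\otimes y^*) = x_0y_0 + x_1y_1 + x_2y_2 + x_3y_3.
\]
The imaginary parts of $x\otimes y^*$ and $y\otimes x^*$ are negatives of each other, since $y\otimes x^* = (x\otimes y^*)^*$ (conjugation reverses the order of a product). Hence their average is this real number. This is the only step involving any algebra, and it is routine.

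The three properties are then checked in turn.
\begin{enumerate}
\item \textbf{Positive definiteness.} We have $\langle\mathbf{q},\mathbf{q}\rangle=\sum_{\ell,\alpha}(q_\alpha^{(\ell)})^2\ge 0$, with equality iff every real coordinate vanishes, i.e.\ iff $\mathbf{q}=\mathbf{0}$.
\item \textbf{Symmetry.} This is immediate from commutativity of real multiplication in each term $q_\alpha^{(\ell)}k_\alpha^{(\ell)}$.
\item \textbf{$\mathbb{R}$-bilinearity.} For real $\alpha,\beta$, the components of $\alpha\mathbf{q}_1+\beta\mathbf{q}_2$ are $\alpha q_{1,\gamma}^{(\ell)}+\beta q_{2,\gamma}^{(\ell)}$, because real scalars commute with ${\sf i},{\sf j},{\sf k}$. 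Distributing the sum proves linearity in the first argument. Linearity in the second argument follows from the first combined with symmetry.
\end{enumerate}

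I expect no real obstacle. The one point deserving care is stating that linearity holds only over $\mathbb{R}$ and not over $\mathbb{H}$. Left multiplication by a non-real quaternion does not commute with the conjugate-based pairing, which is why the statement restricts to $\alpha,\beta\in\mathbb{R}$.
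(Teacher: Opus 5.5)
Your proposal is correct and follows essentially the same route as the paper: both identify $\langle\mathbf{q},\mathbf{k}\rangle$ with the Euclidean dot product $\tilde{\mathbf{q}}^\top\tilde{\mathbf{k}}$ on $\mathbb{R}^{4d}$ and read off positive definiteness, symmetry, and $\mathbb{R}$-bilinearity from it. Your extra check that $y\otimes x^* = (x\otimes y^*)^*$, which collapses the symmetrized definition to $\mathrm{Re}(x\otimes y^*)$, fills in a step the paper simply takes as given from its definition.
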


\begin{proof}
First, from \cref{eq:quat_scalar_product_def}, for any
$x = (x_0, x_1, x_2, x_3), y = (y_0, y_1, y_2, y_3) \in \mathbb{H}$, we have
\begin{equation}
\label{eq:scalar_product_coords_app}
x \cdot y = x_0 y_0 + x_1 y_1 + x_2 y_2 + x_3 y_3.
\end{equation}
Furthermore, from \cref{eq:quat_inner_hamilton},
\begin{equation}
\label{eq:scalar_product_realpart_app}
x \cdot y = \mathrm{Re}\!\left(x \otimes y^{*}\right).
\end{equation}

Next, for $\mathbf{q} = (q^{(1)}, \dots, q^{(d)})$ with
$q^{(\ell)} = (q_0^{(\ell)}, q_1^{(\ell)}, q_2^{(\ell)}, q_3^{(\ell)})$,
we define the real vectorization $\tilde{\mathbf{q}} \in \mathbb{R}^{4d}$ as
\begin{equation}
\label{eq:real_vectorization_app}
\tilde{\mathbf{q}}
:=
\big[
q_0^{(1)}, \dots, q_0^{(d)},\;
q_1^{(1)}, \dots, q_1^{(d)},\;
q_2^{(1)}, \dots, q_2^{(d)},\;
q_3^{(1)}, \dots, q_3^{(d)}
\big]^{\top},
\end{equation}
and define $\tilde{\mathbf{k}}$. Then,
\begin{align}
\label{eq:inner_product_equiv_euclid_app}
\langle \mathbf{q}, \mathbf{k} \rangle
&= \sum_{\ell=1}^{d} \Big( q_0^{(\ell)} k_0^{(\ell)} + q_1^{(\ell)} k_1^{(\ell)} + q_2^{(\ell)} k_2^{(\ell)} + q_3^{(\ell)} k_3^{(\ell)} \Big)
\notag\\
&= \tilde{\mathbf{q}}^{\top} \tilde{\mathbf{k}}.
\end{align}
Thus, $\langle \cdot, \cdot \rangle$ is equivalent to the standard inner product in $\mathbb{R}^{4d}$.

\textbf{(i) Positive definiteness}:
\begin{equation}
\label{eq:positive_definite_app}
\langle \mathbf{q}, \mathbf{q} \rangle
= \tilde{\mathbf{q}}^\top \tilde{\mathbf{q}}
= \|\tilde{\mathbf{q}}\|_2^2
= \sum_{\ell=1}^{d} \Big( (q_0^{(\ell)})^2 + (q_1^{(\ell)})^2 + (q_2^{(\ell)})^2 + (q_3^{(\ell)})^2 \Big) \ge 0.
\end{equation}
Equality holds if and only if $\tilde{\mathbf{q}} = \mathbf{0}$, that is, all components are zero, which implies $\mathbf{q} = \mathbf{0}$.

\textbf{(ii) Symmetry}:
\begin{equation}
\label{eq:symmetry_app}
\langle \mathbf{q}, \mathbf{k} \rangle
= \tilde{\mathbf{q}}^\top \tilde{\mathbf{k}}
= \tilde{\mathbf{k}}^\top \tilde{\mathbf{q}}
= \langle \mathbf{k}, \mathbf{q} \rangle.
\end{equation}

\textbf{(iii) $\mathbb{R}$-bilinearity}:
This immediately follows from the bilinearity of the inner product in $\mathbb{R}^{4d}$.

\medskip
\noindent
\textbf{Remark.}
In the multi-head attention described in \cref{subsec:shared_score_attention}, the query and key vectors for each head can be regarded as
$\mathbf{q}, \mathbf{k} \in \mathbb{H}^{d_h}$. Therefore,
this proposition directly applies to $d = d_h$.
\end{proof}

\subsection{Necessity of Conjugate}
\label{app:why_conjugate}

\Cref{eq:quat_inner_hamilton} states that the quaternion scalar product is denoted by $\mathrm{Re}(q \otimes k^{*})$.
Here, we verify that $\mathrm{Re}(q \otimes k)$ without the conjugate is unsuitable as an inner product because:

Without the conjugate, the real part of the Hamilton product becomes
\begin{equation}
\label{eq:realpart_no_conj_app}
\mathrm{Re}(q \otimes k) = q_0 k_0 - q_1 k_1 - q_2 k_2 - q_3 k_3.
\end{equation}
Consequently, the self-inner-product is
\begin{equation}
\label{eq:self_no_conj_app}
\mathrm{Re}(q \otimes q) = q_0^2 - q_1^2 - q_2^2 - q_3^2.
\end{equation}
For example, for a pure imaginary quaternion $q = (0, 1, 0, 0)$, we have $\mathrm{Re}(q \otimes q) = -1 < 0$.
Therefore, \cref{eq:self_no_conj_app} does not satisfy the positive definiteness and is unsuitable as a similarity measure for the proposed method.

In contrast, when the conjugate is employed,
\begin{equation}
\label{eq:self_with_conj_app}
\mathrm{Re}(q \otimes q^{*}) = q_0^2 + q_1^2 + q_2^2 + q_3^2 = \|q\|^2 \ge 0,
\end{equation}
This guarantees positive definiteness.

\section{Additional Analysis}
\label{app:analysis}

\subsection{Theoretical Results}
\label{sec:theoretical_results}

This section clarifies the relationship between the proposed method and the method developed by \citet{tay-etal-2019-lightweight} from two perspectives:
(i) The generative structure of the score matrix (i.e., what interaction space is being explored), and
(ii) The gradient flow from the loss to the scores (i.e., how learning signals are aggregated or separated).

\subsubsection{Score Structure under Quaternion Linear Transformation}

\paragraph{Notation.}
Let the components of the input $\mathbf{X} \in \mathbb{H}^{T \times d_{\mathrm{in}}}$ be denoted as
$\mathbf{X} = (\mathbf{X}_0, \mathbf{X}_1, \mathbf{X}_2, \mathbf{X}_3)$,
and similarly $\mathbf{Q} = (\mathbf{Q}_0, \mathbf{Q}_1, \mathbf{Q}_2, \mathbf{Q}_3)$.
The quaternion linear transformations are given by
$\mathbf{Q} = \mathbf{X} \otimes \mathbf{W}_Q$ and $\mathbf{K} = \mathbf{X} \otimes \mathbf{W}_K$
($\mathbf{W}_Q, \mathbf{W}_K \in \mathbb{H}^{d_{\mathrm{in}} \times d_h}$).
We also define the conjugate transpose as $\mathbf{K}^{\dagger} := (\mathbf{K}^*)^\top$.

\paragraph{Score definitions.}
Following our implementation, the score of the model of \citet{tay-etal-2019-lightweight} is defined as
\begin{equation}
\mathbf{S}^{\mathrm{Tay}} = \mathbf{Q} \otimes \mathbf{K}^\top,
\end{equation}
where $\mathrm{softmax}$ is applied independently to each component $\mathbf{S}_\alpha^{\mathrm{Tay}}$ ($\alpha \in \{0, 1, 2, 3\}$).
In contrast, the score of the proposed method is defined as
\begin{equation}
\mathbf{S} = \mathrm{Re}\!\left(\mathbf{Q} \otimes \mathbf{K}^{\dagger}\right),
\end{equation}
with a single $\mathrm{softmax}$ function applied to \cref{eq:ours_score_matrix}.
Note that even when considering a variant of Tay et al.'s formulation that employs conjugate $\mathbf{K}^*$, the signs in the equations change owing to the sign inversion of the imaginary parts; however, the decomposition structure in \cref{thm:score_decomposition} holds analogously. For simplicity, scaling factors are omitted in this section.

\begin{theorem}[Score decomposition]
\label{thm:score_decomposition}
For an input $\mathbf{X} \in \mathbb{H}^{T \times d_{\mathrm{in}}}$,
let $\mathbf{Q} = \mathbf{X} \otimes \mathbf{W}_Q$ and $\mathbf{K} = \mathbf{X} \otimes \mathbf{W}_K$.
Then, each component $\mathbf{S}_\alpha^{\mathrm{Tay}}$ ($\alpha \in \{0, 1, 2, 3\}$) of the Hamilton product-based score $\mathbf{S}^{\mathrm{Tay}} = \mathbf{Q} \otimes \mathbf{K}^\top$ can be decomposed as follows:
\begin{equation}
\mathbf{S}_\alpha^{\mathrm{Tay}}
= \sum_{\beta, \gamma \in \{0,1,2,3\}}
\mathbf{X}_\beta\, \mathbf{\Lambda}_{\alpha}^{\beta\gamma}(\mathbf{W}_Q, \mathbf{W}_K)\, \mathbf{X}_\gamma^\top,
\label{eq:score_decomposition}
\end{equation}
where $\mathbf{\Lambda}_{\alpha}^{\beta\gamma} \in \mathbb{R}^{d_{\mathrm{in}} \times d_{\mathrm{in}}}$ is the coefficient matrix determined by $\mathbf{W}_Q$ and $\mathbf{W}_K$.
\end{theorem}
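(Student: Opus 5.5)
The plan is to expand everything into real components and use bilinearity. First, from the component formulas for the quaternion linear layer in Appendix~\ref{sec:qlinear}, each query and key component is a real linear combination of the input components right-multiplied by weight components. Concretely, there are fixed signs $\epsilon^{\alpha}_{\beta\mu}\in\{-1,0,1\}$ (the structure constants of $\mathbb{H}$, defined by $e_\beta e_\mu = \sum_\alpha \epsilon^{\alpha}_{\beta\mu} e_\alpha$ with $e_0=1,e_1={\sf i},e_2={\sf j},e_3={\sf k}$) such that
\begin{equation*}
\mathbf{Q}_\mu = \sum_{\beta,\nu} \epsilon^{\mu}_{\beta\nu}\,\mathbf{X}_\beta \mathbf{W}_{Q,\nu},
\qquad
\mathbf{K}_\rho = \sum_{\gamma,\sigma} \epsilon^{\rho}_{\gamma\sigma}\,\mathbf{X}_\gamma \mathbf{W}_{K,\sigma}.
\end{equation*}
Here $\mathbf{W}_{Q,\nu}$ denotes the real components of $\mathbf{W}_Q$. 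Since all entries of $\mathbf{X}_\beta,\mathbf{W}_{Q,\nu}$ are real, these real matrix products commute entrywise. The Hamilton matrix product therefore reduces to these componentwise real matrix products with the quaternion signs attached.

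Second, I would write the score itself componentwise. Because $\mathbf{K}^\top$ only transposes, with no conjugation, its components are $\mathbf{K}_\rho^\top$. The Hamilton product formula then gives
\begin{equation*}
\mathbf{S}^{\mathrm{Tay}}_\alpha = \sum_{\mu,\rho}\epsilon^{\alpha}_{\mu\rho}\,\mathbf{Q}_\mu \mathbf{K}_\rho^\top .
\end{equation*}
Substituting the first step and using $(\mathbf{X}_\gamma\mathbf{W}_{K,\sigma})^\top=\mathbf{W}_{K,\sigma}^\top\mathbf{X}_\gamma^\top$, each term becomes $\mathbf{X}_\beta\,\mathbf{W}_{Q,\nu}\mathbf{W}_{K,\sigma}^\top\,\mathbf{X}_\gamma^\top$ multiplied by a product of three signs. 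Grouping by $(\beta,\gamma)$ then gives the claimed form, with
\begin{equation*}
\mathbf{\Lambda}^{\beta\gamma}_\alpha=\sum_{\mu,\rho,\nu,\sigma}\epsilon^{\alpha}_{\mu\rho}\,\epsilon^{\mu}_{\beta\nu}\,\epsilon^{\rho}_{\gamma\sigma}\,\mathbf{W}_{Q,\nu}\mathbf{W}_{K,\sigma}^\top \in\mathbb{R}^{d_{\mathrm{in}}\times d_{\mathrm{in}}}.
\end{equation*}
This matrix depends only on $\mathbf{W}_Q,\mathbf{W}_K$, which is exactly the statement. The same computation with $\mathbf{K}^\dagger$ flips the signs of the terms with $\rho\neq0$ and gives the shared score $\mathbf{S}$ in the same form. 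This is what the subsequent ``same subspace'' discussion needs.

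The main obstacle is not conceptual but bookkeeping: keeping the sign and order conventions consistent. The quaternion layer writes $\mathbf{X}\otimes\mathbf{W}$ with $\mathbf{X}$ on the left, and quaternion multiplication is noncommutative. I would first fix the structure constants once from the Hamilton product table, and derive the $\mathbf{Y}_\alpha$ formulas of Appendix~\ref{sec:qlinear} from them as a consistency check. I would also verify that the transpose of a quaternion matrix product is handled purely componentwise; this holds here because every product is formed after reducing to real blocks.

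An explicit listing of all $64$ coefficient matrices is unnecessary. I would only illustrate one case, for example $\alpha=0$, to make the form concrete.
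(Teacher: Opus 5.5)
Your proposal is correct and follows essentially the same route as the paper: write each $\mathbf{Q}_\mu$ and $\mathbf{K}_\rho$ as a real linear combination of the $\mathbf{X}_\beta$, substitute into the component formulas $\mathbf{S}^{\mathrm{Tay}}_\alpha=\sum\pm\,\mathbf{Q}_\mu\mathbf{K}_\rho^\top$, and group the terms by $(\beta,\gamma)$. Your structure constants just encode the paper's blocks $\boldsymbol{\Phi}_{\mu\beta}(\mathbf{W}_Q)=\sum_\nu\epsilon^{\mu}_{\beta\nu}\mathbf{W}_{Q,\nu}$ (and likewise $\boldsymbol{\Psi}$) and its signs $\sigma_{\alpha,\mu\rho}=\epsilon^{\alpha}_{\mu\rho}$ in your indexing, so your $\mathbf{\Lambda}^{\beta\gamma}_\alpha$ coincides with the paper's.
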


\begin{proof}
By the definition of the Hamilton product, each component of the quaternion linear transformation $\mathbf{Q} = \mathbf{X} \otimes \mathbf{W}_Q$ is given by
\begin{align}
\mathbf{Q}_0 &= \mathbf{X}_0 \mathbf{W}_{Q,0} - \mathbf{X}_1 \mathbf{W}_{Q,1} - \mathbf{X}_2 \mathbf{W}_{Q,2} - \mathbf{X}_3 \mathbf{W}_{Q,3}, \\
\mathbf{Q}_1 &= \mathbf{X}_0 \mathbf{W}_{Q,1} + \mathbf{X}_1 \mathbf{W}_{Q,0} + \mathbf{X}_2 \mathbf{W}_{Q,3} - \mathbf{X}_3 \mathbf{W}_{Q,2}, \\
\mathbf{Q}_2 &= \mathbf{X}_0 \mathbf{W}_{Q,2} - \mathbf{X}_1 \mathbf{W}_{Q,3} + \mathbf{X}_2 \mathbf{W}_{Q,0} + \mathbf{X}_3 \mathbf{W}_{Q,1}, \\
\mathbf{Q}_3 &= \mathbf{X}_0 \mathbf{W}_{Q,3} + \mathbf{X}_1 \mathbf{W}_{Q,2} - \mathbf{X}_2 \mathbf{W}_{Q,1} + \mathbf{X}_3 \mathbf{W}_{Q,0}.
\end{align}
Thus, each component can be expressed in a unified form as
\begin{equation}
\mathbf{Q}_\mu = \sum_{\beta \in \{0,1,2,3\}} \mathbf{X}_\beta\, \boldsymbol{\Phi}_{\mu\beta}(\mathbf{W}_Q),
\qquad
\mathbf{K}_\nu = \sum_{\gamma \in \{0,1,2,3\}} \mathbf{X}_\gamma\, \boldsymbol{\Psi}_{\nu\gamma}(\mathbf{W}_K),
\end{equation}
where $\boldsymbol{\Phi}_{\mu\beta}(\mathbf{W}_Q), \boldsymbol{\Psi}_{\nu\gamma}(\mathbf{W}_K) \in \mathbb{R}^{d_{\mathrm{in}} \times d_h}$ are linear combinations of the components.
Matrix $\boldsymbol{\Phi}(\mathbf{W}_Q)$ has the following structure:
\begin{equation}
\boldsymbol{\Phi}(\mathbf{W}_Q) =
\begin{pmatrix}
\mathbf{W}_{Q,0} & -\mathbf{W}_{Q,1} & -\mathbf{W}_{Q,2} & -\mathbf{W}_{Q,3} \\
\mathbf{W}_{Q,1} & \mathbf{W}_{Q,0}  & \mathbf{W}_{Q,3}  & -\mathbf{W}_{Q,2} \\
\mathbf{W}_{Q,2} & -\mathbf{W}_{Q,3} & \mathbf{W}_{Q,0}  & \mathbf{W}_{Q,1} \\
\mathbf{W}_{Q,3} & \mathbf{W}_{Q,2}  & -\mathbf{W}_{Q,1} & \mathbf{W}_{Q,0}
\end{pmatrix}.
\end{equation}
Matrix $\boldsymbol{\Psi}(\mathbf{W}_K)$ has the same structure.

Each component of the Hamilton product $\mathbf{S}^{\mathrm{Tay}} = \mathbf{Q} \otimes \mathbf{K}^\top$ is given by
\begin{align}
\mathbf{S}_0^{\mathrm{Tay}} &= \mathbf{Q}_0 \mathbf{K}_0^\top - \mathbf{Q}_1 \mathbf{K}_1^\top - \mathbf{Q}_2 \mathbf{K}_2^\top - \mathbf{Q}_3 \mathbf{K}_3^\top, \\
\mathbf{S}_1^{\mathrm{Tay}} &= \mathbf{Q}_0 \mathbf{K}_1^\top + \mathbf{Q}_1 \mathbf{K}_0^\top + \mathbf{Q}_2 \mathbf{K}_3^\top - \mathbf{Q}_3 \mathbf{K}_2^\top, \\
\mathbf{S}_2^{\mathrm{Tay}} &= \mathbf{Q}_0 \mathbf{K}_2^\top - \mathbf{Q}_1 \mathbf{K}_3^\top + \mathbf{Q}_2 \mathbf{K}_0^\top + \mathbf{Q}_3 \mathbf{K}_1^\top, \\
\mathbf{S}_3^{\mathrm{Tay}} &= \mathbf{Q}_0 \mathbf{K}_3^\top + \mathbf{Q}_1 \mathbf{K}_2^\top - \mathbf{Q}_2 \mathbf{K}_1^\top + \mathbf{Q}_3 \mathbf{K}_0^\top.
\end{align}
Substituting each $\mathbf{Q}_\mu \mathbf{K}_\nu^\top$ and rearranging, we obtain
\begin{equation}
\mathbf{Q}_\mu \mathbf{K}_\nu^\top
= \sum_{\beta, \gamma}
\mathbf{X}_\beta\, \boldsymbol{\Phi}_{\mu\beta}(\mathbf{W}_Q) \boldsymbol{\Psi}_{\nu\gamma}(\mathbf{W}_K)^\top\, \mathbf{X}_\gamma^\top.
\end{equation}
Therefore, $\mathbf{S}_\alpha^{\mathrm{Tay}}$ can be written as
\begin{equation}
\mathbf{S}_\alpha^{\mathrm{Tay}}
= \sum_{\beta, \gamma}
\mathbf{X}_\beta\, \mathbf{\Lambda}_{\alpha}^{\beta\gamma}\, \mathbf{X}_\gamma^\top,
\end{equation}
where
\begin{equation}
\mathbf{\Lambda}_{\alpha}^{\beta\gamma}
:= \sum_{(\mu, \nu) \in \mathcal{I}_\alpha}
\sigma_{\alpha, \mu\nu}\,
\boldsymbol{\Phi}_{\mu\beta}(\mathbf{W}_Q) \boldsymbol{\Psi}_{\nu\gamma}(\mathbf{W}_K)^\top.
\label{eq:lambda_definition}
\end{equation}
Here, $\mathcal{I}_\alpha \subset \{0, 1, 2, 3\}^2$ denotes the set of index pairs appearing in the expansion of $\mathbf{S}_\alpha^{\mathrm{Tay}}$,
and $\sigma_{\alpha, \mu\nu} \in \{+1, -1\}$ denotes the corresponding sign.
\end{proof}

\paragraph{Shared interaction subspace.}
\Cref{thm:score_decomposition} demonstrates that all score components can be expressed as sums of bilinear terms of the form $\mathbf{X}_\beta (\cdot) \mathbf{X}_\gamma^\top$. Each term combines two input components $(\mathbf{X}_\beta, \mathbf{X}_\gamma)$ through a coefficient matrix, where
$\boldsymbol{\Phi}_{\mu\beta}(\mathbf{W}_Q) \boldsymbol{\Psi}_{\nu\gamma}(\mathbf{W}_K)^\top \in \mathbb{R}^{d_{\mathrm{in}} \times d_{\mathrm{in}}}$ serves as the fundamental building block of these coefficient matrices.

We define the \textbf{interaction subspace} as the subspace spanned by these coefficient matrices as follows:
\begin{equation}
\mathcal{U}(\mathbf{W}_Q, \mathbf{W}_K)
:= \mathrm{span}\left\{
\boldsymbol{\Phi}_{\mu\beta}(\mathbf{W}_Q) \boldsymbol{\Psi}_{\nu\gamma}(\mathbf{W}_K)^\top
\ \Big|\
\mu, \nu, \beta, \gamma \in \{0, 1, 2, 3\}
\right\} \subset \mathbb{R}^{d_{\mathrm{in}} \times d_{\mathrm{in}}}.
\label{eq:interaction_subspace}
\end{equation}

The discussion on the interaction subspace in this section addresses the score matrices prior to the application of the softmax function.
The difference between the two methods manifests as a distinction in nonlinear mappings: whether softmax is applied once or four times to the same set of bilinear interactions.

\begin{corollary}[Ours lies in the same interaction subspace]
\label{cor:shared_basis}
The score of the proposed method, $\mathbf{S} = \mathrm{Re}(\mathbf{Q} \otimes \mathbf{K}^{\dagger})$, can also be expressed as
\begin{equation}
\mathbf{S}
= \sum_{\beta, \gamma \in \{0,1,2,3\}} \mathbf{X}_\beta\, \mathbf{M}^{\beta\gamma}(\mathbf{W}_Q, \mathbf{W}_K)\, \mathbf{X}_\gamma^\top.
\end{equation}
Furthermore, each $\mathbf{M}^{\beta\gamma}$ belongs to the same interaction subspace
$\mathcal{U}(\mathbf{W}_Q, \mathbf{W}_K)$ defined in \cref{eq:interaction_subspace}.
\end{corollary}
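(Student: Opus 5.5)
The plan is to follow the proof of \cref{thm:score_decomposition} step by step, only changing which component products enter the score. First, write the proposed score as a real matrix. The $(i,j)$ entry of $\mathbf{Q}\otimes\mathbf{K}^{\dagger}$ is $\sum_\ell Q_i^{(\ell)}\otimes (K_j^{(\ell)})^{*}$. By \cref{eq:quat_inner_hamilton} and \cref{eq:quat_inner_product}, its real part is $\sum_\ell Q_i^{(\ell)}\cdot K_j^{(\ell)}$. This gives
\begin{equation*}
\mathbf{S} = \mathbf{Q}_0\mathbf{K}_0^\top + \mathbf{Q}_1\mathbf{K}_1^\top + \mathbf{Q}_2\mathbf{K}_2^\top + \mathbf{Q}_3\mathbf{K}_3^\top ,
\end{equation*}
which is the Euclidean form of Proposition~\ref{prop:inner_product}. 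It is a signed sum of products $\mathbf{Q}_\mu\mathbf{K}_\nu^\top$ over the index set $\{(\mu,\mu)\}$ with all signs $+1$. Compared with $\mathbf{S}_0^{\mathrm{Tay}}$, only the signs differ.

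Second, insert the component expansions already obtained in the proof of \cref{thm:score_decomposition}:
\begin{equation*}
\mathbf{Q}_\mu=\sum_\beta \mathbf{X}_\beta\boldsymbol{\Phi}_{\mu\beta}(\mathbf{W}_Q), \qquad \mathbf{K}_\nu=\sum_\gamma \mathbf{X}_\gamma\boldsymbol{\Psi}_{\nu\gamma}(\mathbf{W}_K).
\end{equation*}
That proof also gives the identity $\mathbf{Q}_\mu\mathbf{K}_\nu^\top=\sum_{\beta,\gamma}\mathbf{X}_\beta\boldsymbol{\Phi}_{\mu\beta}\boldsymbol{\Psi}_{\nu\gamma}^\top\mathbf{X}_\gamma^\top$, which I will reuse here. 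Summing over $\mu=\nu$ and exchanging the finite sums yields the claimed form with
\begin{equation*}
\mathbf{M}^{\beta\gamma}:=\sum_{\mu=0}^{3}\boldsymbol{\Phi}_{\mu\beta}(\mathbf{W}_Q)\boldsymbol{\Psi}_{\mu\gamma}(\mathbf{W}_K)^\top .
\end{equation*}

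Third, show that $\mathbf{M}^{\beta\gamma}\in\mathcal{U}(\mathbf{W}_Q,\mathbf{W}_K)$. Each summand is one of the spanning generators in \cref{eq:interaction_subspace}, namely the case $\nu=\mu$. A linear combination of generators with coefficients $+1$ lies in their span, and this finishes the argument.

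I do not expect a real obstacle here. The one point that needs care is the first step: the conjugate transpose $\mathbf{K}^\dagger$ must turn all four diagonal terms positive, and it must not bring in cross terms $\mathbf{Q}_\mu\mathbf{K}_\nu^\top$ with $\mu\neq\nu$ into the real part. Reading off the real row of the Hamilton product formula in \cref{subsec:quat_algebra} with $q_1,q_2,q_3$ negated confirms both. I will also note that the transpose combined with conjugation acts entrywise and does not affect the bilinear bookkeeping in the $\mathbf{X}$ components.
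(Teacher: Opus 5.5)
Your proposal is correct and follows the paper's proof essentially line for line. Like the paper, you reduce $\mathbf{S}$ to $\sum_{\alpha}\mathbf{Q}_\alpha\mathbf{K}_\alpha^\top$ via $\mathrm{Re}(q\otimes k^{*})=q_0k_0+q_1k_1+q_2k_2+q_3k_3$, substitute the component expansions from the proof of \cref{thm:score_decomposition} to obtain $\mathbf{M}^{\beta\gamma}=\sum_{\alpha}\boldsymbol{\Phi}_{\alpha\beta}(\mathbf{W}_Q)\boldsymbol{\Psi}_{\alpha\gamma}(\mathbf{W}_K)^\top$, and note that this is a sum of spanning generators of $\mathcal{U}(\mathbf{W}_Q,\mathbf{W}_K)$ with $\nu=\mu$.
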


\begin{proof}
From $\mathrm{Re}(q \otimes k^{*}) = q_0 k_0 + q_1 k_1 + q_2 k_2 + q_3 k_3$, we have
\begin{equation}
\mathbf{S} = \mathrm{Re}(\mathbf{Q} \otimes \mathbf{K}^{\dagger})
= \mathbf{Q}_0 \mathbf{K}_0^\top + \mathbf{Q}_1 \mathbf{K}_1^\top + \mathbf{Q}_2 \mathbf{K}_2^\top + \mathbf{Q}_3 \mathbf{K}_3^\top.
\end{equation}
Expanding each $\mathbf{Q}_\alpha \mathbf{K}_\alpha^\top$ yields
\begin{equation}
\mathbf{Q}_\alpha \mathbf{K}_\alpha^\top
= \sum_{\beta, \gamma}
\mathbf{X}_\beta\, \boldsymbol{\Phi}_{\alpha\beta}(\mathbf{W}_Q) \boldsymbol{\Psi}_{\alpha\gamma}(\mathbf{W}_K)^\top\, \mathbf{X}_\gamma^\top.
\end{equation}
Summing over all four components and rearranging, we obtain
\begin{equation}
\mathbf{S} = \sum_{\beta, \gamma} \mathbf{X}_\beta\, \mathbf{M}^{\beta\gamma}\, \mathbf{X}_\gamma^\top,
\qquad
\mathbf{M}^{\beta\gamma} := \sum_{\alpha \in \{0,1,2,3\}}
\boldsymbol{\Phi}_{\alpha\beta}(\mathbf{W}_Q) \boldsymbol{\Psi}_{\alpha\gamma}(\mathbf{W}_K)^\top.
\end{equation}
Because $\mathbf{M}^{\beta\gamma}$ is a linear combination of the generators in \cref{eq:interaction_subspace},
it follows that $\mathbf{M}^{\beta\gamma} \in \mathcal{U}(\mathbf{W}_Q, \mathbf{W}_K)$.
\end{proof}

\textbf{Remark.}
We summarize the implications of \cref{thm:score_decomposition} and Theorem \ref{eq:score_decomposition} and 
Corollary~\ref{cor:shared_basis}.

\textbf{(1) Apparent complexity vs.\ actual simplicity:}
The Hamilton product $\mathbf{Q} \otimes \mathbf{K}^\top$ is an operation that mixes components via bilinear interaction terms between the components. Intuitively, one might expect the four score components to capture different score contributions qualitatively.
However, \cref{thm:score_decomposition} demonstrates that all components can be described as sums of terms with identical form $\mathbf{X}_\beta (\cdot) \mathbf{X}_\gamma^\top$.

\textbf{(2) Relationship with the proposed method:}
Using Corollary~\ref{cor:shared_basis}, the single score $\mathbf{S}$ of the proposed method is generated by coefficient matrices belonging to the same interaction subspace $\mathcal{U}(\mathbf{W}_Q, \mathbf{W}_K)$.
Therefore, employing the four-component scores of the existing method can be understood as not expanding the interaction subspace itself, but rather as a design that increases the number of nonlinear mappings (component-wise softmax) applied to the same set of interactions.

\textbf{(3) Connection to empirical validation:}
The experiments described in \cref{sec:discussion} confirmed that the agreement rate among the four components of the existing method was only 3.83\%, indicating that each component learned substantially different attention patterns.
Nevertheless, because the single score of the proposed method achieves comparable performance under this pre-mixing setting, the expressiveness introduced by component-wise softmax is redundant in the regime examined here.

\subsubsection{Gradient Dynamics}

Next, we examine the gradient flow from the loss to score matrices.

\begin{theorem}[Gradient aggregation vs.\ separation]
\label{thm:gradient_aggregation}
Assume that the loss function $\mathcal{L}$ is differentiable with respect to the output $\mathbf{O}$.
In the proposed method, $\mathbf{A} = \mathrm{softmax}(\mathbf{S})$ is shared across all the components, with
$\mathbf{O}_\alpha = \mathbf{A} \mathbf{V}_\alpha$ ($\alpha \in \{0, 1, 2, 3\}$). Then,
\begin{align}
\frac{\partial \mathcal{L}}{\partial \mathbf{A}}
&= \sum_{\alpha \in \{0,1,2,3\}}
\frac{\partial \mathcal{L}}{\partial \mathbf{O}_\alpha}\, \mathbf{V}_\alpha^\top,
\label{eq:dLdA_sum}\\
\frac{\partial \mathcal{L}}{\partial \mathbf{S}}
&= J_{\mathrm{sm}}(\mathbf{S})^\top \left( \frac{\partial \mathcal{L}}{\partial \mathbf{A}} \right)
= J_{\mathrm{sm}}(\mathbf{S})^\top \left( \sum_{\alpha \in \{0,1,2,3\}}
\frac{\partial \mathcal{L}}{\partial \mathbf{O}_\alpha}\, \mathbf{V}_\alpha^\top \right),
\label{eq:dLdS_sum}
\end{align}
where $J_{\mathrm{sm}}(\mathbf{S})$ denotes the Jacobian (linear operator) of the row-wise softmax function.
In contrast, the existing method computes $\mathbf{A}_\alpha = \mathrm{softmax}(\mathbf{S}_\alpha^{\mathrm{Tay}})$ independently for each component, with
$\mathbf{O}_\alpha = \mathbf{A}_\alpha \mathbf{V}_\alpha$. Thus,
\begin{equation}
\frac{\partial \mathcal{L}}{\partial \mathbf{S}_\alpha^{\mathrm{Tay}}}
= J_{\mathrm{sm}}(\mathbf{S}_\alpha^{\mathrm{Tay}})^\top
\left( \frac{\partial \mathcal{L}}{\partial \mathbf{O}_\alpha}\, \mathbf{V}_\alpha^\top \right),
\qquad \alpha \in \{0, 1, 2, 3\},
\label{eq:dLdS_tay}
\end{equation}
where each component score receives gradients only from its corresponding component output (i.e., the learning pathways are separated).
\end{theorem}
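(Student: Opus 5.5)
The argument is a chain-rule computation carried out entry by entry. Each $\mathbf{O}_\alpha$ is a matrix product, and the softmax is applied row-wise, so all we need are two standard backpropagation rules. The first is the gradient of a matrix product with respect to its left factor. The second is the transpose of the softmax Jacobian. The content of the theorem is bookkeeping: we track which outputs $\mathbf{O}_\alpha$ depend on which score matrix, and that dependency pattern is the whole difference between the two methods.

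\textbf{Step 1 (proposed method, derivative with respect to $\mathbf{A}$).} I would write $(\mathbf{O}_\alpha)_{tc} = \sum_s \mathbf{A}_{ts} (\mathbf{V}_\alpha)_{sc}$ for each $\alpha$. Then $\partial (\mathbf{O}_\alpha)_{tc} / \partial \mathbf{A}_{ts} = (\mathbf{V}_\alpha)_{sc}$, and this derivative vanishes for any other row index. In the proposed method $\mathcal{L}$ depends on $\mathbf{A}$ through all four outputs. We treat $\mathbf{V}_\alpha$ as fixed with respect to $\mathbf{A}$, since $\mathbf{V}$ is produced from $\mathbf{X}$ by a separate projection. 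The multivariate chain rule then gives
$\partial\mathcal{L}/\partial \mathbf{A}_{ts} = \sum_\alpha \sum_c (\partial\mathcal{L}/\partial \mathbf{O}_\alpha)_{tc} (\mathbf{V}_\alpha)_{sc}$. This is exactly the $(t,s)$ entry of $\sum_\alpha (\partial\mathcal{L}/\partial\mathbf{O}_\alpha)\mathbf{V}_\alpha^\top$, which is \cref{eq:dLdA_sum}.

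\textbf{Step 2 (derivative with respect to $\mathbf{S}$).} $\mathbf{A} = \mathrm{softmax}(\mathbf{S})$ is a differentiable map $\mathbb{R}^{T\times T}\to\mathbb{R}^{T\times T}$ with Jacobian $J_{\mathrm{sm}}(\mathbf{S})$. Because the softmax acts row-wise, this Jacobian is block-diagonal, with blocks $\mathrm{diag}(a_t) - a_t a_t^\top$ for the rows $a_t$ of $\mathbf{A}$. Backpropagating through it applies the adjoint, $\partial\mathcal{L}/\partial\mathbf{S} = J_{\mathrm{sm}}(\mathbf{S})^\top(\partial\mathcal{L}/\partial\mathbf{A})$. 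Substituting Step 1 yields \cref{eq:dLdS_sum}. The blocks are symmetric, so the transpose is cosmetic, but I would keep it to match the statement.

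\textbf{Step 3 (existing method).} Here $\mathbf{O}_\alpha = \mathbf{A}_\alpha\mathbf{V}_\alpha$ with $\mathbf{A}_\alpha = \mathrm{softmax}(\mathbf{S}^{\mathrm{Tay}}_\alpha)$. For $\alpha'\neq\alpha$, the output $\mathbf{O}_{\alpha'}$ does not depend on $\mathbf{S}^{\mathrm{Tay}}_\alpha$ once the score matrices are treated as the independent variables. The sum in the chain rule therefore collapses to the single term $\alpha'=\alpha$. Repeating Steps 1 and 2 with one summand gives \cref{eq:dLdS_tay}.

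The only delicate point, and the one I expect to be the main obstacle, is making precise what ``partial derivative with respect to $\mathbf{S}^{\mathrm{Tay}}_\alpha$'' means. The four score components are not functionally independent: by \cref{thm:score_decomposition} they are all functions of the same $\mathbf{X}$, $\mathbf{W}_Q$ and $\mathbf{W}_K$. I would therefore state explicitly that the gradients are taken in the computational graph with each score matrix as an intermediate node. These are the local adjoints that backpropagation produces. Coupling through $\mathbf{W}_Q$ and $\mathbf{W}_K$ only appears one step further upstream, where the four separated signals are recombined through $\boldsymbol{\Phi}$ and $\boldsymbol{\Psi}$. That upstream recombination does not affect the claimed separation at the score level.
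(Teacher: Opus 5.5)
Your proposal is correct and follows the same route as the paper: the chain rule through $\mathbf{O}_\alpha = \mathbf{A}\mathbf{V}_\alpha$ gives the summed gradient, backpropagating through the row-wise softmax applies $J_{\mathrm{sm}}(\mathbf{S})^\top$, and the existing method is the same computation with the sum collapsing to the single term $\alpha' = \alpha$. Your entrywise derivation, the explicit blocks $\mathrm{diag}(a_t) - a_t a_t^\top$, and your point that the derivatives are local adjoints at the score nodes of the computational graph make precise what the paper's three-line proof leaves implicit.
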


\begin{proof}
\Cref{eq:dLdA_sum} immediately follows the chain rule applied to $\mathbf{O}_\alpha = \mathbf{A} \mathbf{V}_\alpha$.
Because softmax acts linearly on $\partial \mathcal{L} / \partial \mathbf{A}$,
we have $\partial \mathcal{L} / \partial \mathbf{S} = J_{\mathrm{sm}}(\mathbf{S})^\top (\partial \mathcal{L} / \partial \mathbf{A})$, which yields \cref{eq:dLdS_sum}. The derivation of \cref{eq:dLdS_tay} for the existing method is analogous.
\end{proof}

\begin{proposition}[Heuristic: gradient norm scale]
\label{prop:gradient_norm}
In the proposed method, the contributions from multiple components are aggregated into a single $\mathbf{S}$ using \cref{eq:dLdS_sum}.
When the statistics of $\mathbf{V}_\alpha$ and $\partial \mathcal{L} / \partial \mathbf{O}_\alpha$ are comparable across components,
$\| \partial \mathcal{L} / \partial \mathbf{S} \|_F$ can be on the same order as
$\sum_{\alpha} \| \partial \mathcal{L} / \partial \mathbf{S}_\alpha^{\mathrm{Tay}} \|_F$.
We empirically verify this tendency in \cref{tab:gradient_norm}.
\end{proposition}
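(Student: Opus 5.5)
The statement to address is Proposition~\ref{prop:gradient_norm}. It is explicitly a heuristic, so the plan is to make precise a simple random-model version under which the claim holds, rather than prove a sharp inequality. By \cref{thm:gradient_aggregation}, write $\mathbf{G}_\alpha := (\partial \mathcal{L}/\partial \mathbf{O}_\alpha)\mathbf{V}_\alpha^\top$. Then
\[
\partial \mathcal{L}/\partial \mathbf{S} = J_{\mathrm{sm}}(\mathbf{S})^\top \textstyle\sum_\alpha \mathbf{G}_\alpha ,
\qquad
\partial \mathcal{L}/\partial \mathbf{S}^{\mathrm{Tay}}_\alpha = J_{\mathrm{sm}}(\mathbf{S}^{\mathrm{Tay}}_\alpha)^\top \mathbf{G}_\alpha .
\]
The comparison therefore reduces to two questions: how the softmax Jacobians differ between the two methods, and how $\|\sum_\alpha \mathbf{G}_\alpha\|_F$ compares with $\sum_\alpha \|\mathbf{G}_\alpha\|_F$.

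\textbf{Step 1: the Jacobian factors.} For row-wise softmax, the Jacobian of row $t$ is $\mathrm{diag}(a_t) - a_t a_t^\top$. This matrix is symmetric positive semidefinite and has operator norm at most $1/2$. I would assume the comparability hypothesis, namely that the attention rows have similar entropy or concentration in both methods. Under that assumption, the operator $J$ acting on a fixed $\mathbf{G}$ has comparable norm in the two methods up to constants $c_1 \le \|J^\top \mathbf{G}\|_F/\|\mathbf{G}\|_F \le c_2$ on the relevant subspace. This lets us replace each Jacobian factor by a scalar of comparable size.

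\textbf{Step 2: the aggregated sum.} The upper bound is immediate from the triangle inequality, $\|\sum_\alpha \mathbf{G}_\alpha\|_F \le \sum_\alpha \|\mathbf{G}_\alpha\|_F$. For the lower bound, I would model the $\mathbf{G}_\alpha$ as having comparable norms $g$ with pairwise correlation $\rho$. Expanding the square gives
\[
\Bigl\|\sum_\alpha \mathbf{G}_\alpha\Bigr\|_F^2 = 4g^2 + 12\rho g^2 ,
\]
so $\|\sum_\alpha \mathbf{G}_\alpha\|_F = 2g\sqrt{1+3\rho}$, whereas $\sum_\alpha \|\mathbf{G}_\alpha\|_F = 4g$. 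The ratio is therefore $\sqrt{1+3\rho}/2$, which lies in $[1/2, 1]$ whenever $\rho \ge 0$. This is exactly the claim that the two quantities are of the same order. The empirically observed positive gradient coupling after training (off-diagonal correlation $0.68$) gives a ratio near $0.87$.

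\textbf{Main obstacle.} The difficulty is that $\rho$ can be negative, and the components can cancel. At $\rho = -1/3$ the sum can even vanish. So the bound cannot hold unconditionally, and I would state it with $\rho \ge 0$, or more generally $\rho$ bounded away from $-1/3$, as an explicit hypothesis. I would then defer to \cref{tab:gradient_norm} to confirm empirically that this regime holds, which is consistent with the paper labeling the statement a heuristic.
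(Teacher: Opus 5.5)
The paper gives no derivation for this proposition. It rests only on the aggregation identity \cref{eq:dLdS_sum} and on the measured norms in \cref{tab:gradient_norm} ($1.72\times 10^{-3}$ versus $1.95\times 10^{-3}$). Your route is therefore genuinely different, and it adds something the paper lacks:
\begin{itemize}
\item The triangle inequality gives an unconditional one-sided bound on the pre-Jacobian factor.
\item Under your equal-norm, equal-cosine model, the identity $\|\sum_\alpha \mathbf{G}_\alpha\|_F / \sum_\alpha \|\mathbf{G}_\alpha\|_F = \sqrt{1+3\rho}/2$ shows exactly when ``same order'' holds. It also exposes the cancellation mechanism ($\rho \to -1/3$) under which it fails, which the paper's wording (``can be'') leaves open.
\end{itemize}
Your negative-$\rho$ caveat is correct and worth keeping.

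Two steps need repair.

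\textbf{Step 1 does not follow from ``similar entropy.''} The upper constant is free ($c_2 = 1/2$ by your operator-norm bound), but no lower constant $c_1 > 0$ can be derived this way. $J_t = \mathrm{diag}(a_t) - a_t a_t^\top$ annihilates constant vectors, and its eigenvalues interlace the sorted entries of $a_t$. So $\|J^\top \mathbf{G}\|_F/\|\mathbf{G}\|_F$ depends on where $\mathbf{G}$ puts its mass relative to the attention map, and it can be arbitrarily small at any entropy level.

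There is a further problem. In your method, one shared map acts on $\sum_\alpha \mathbf{G}_\alpha$. In Tay's method, each $\mathbf{G}_\alpha$ meets its own map, and these maps peak at different positions (\cref{tab:agreement}). Replacing all of these Jacobians by one scalar is an assumption, not a consequence, and both directions of ``same order'' depend on it.

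State it as an explicit hypothesis. Add the cross-method comparability you use without saying so: the $\mathbf{G}_\alpha$ in the two formulas come from different forward passes, whereas the proposition only assumes comparability across components.

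\textbf{The $0.68$ you cite is not your $\rho$.} That number is a Pearson correlation of scalar gradient norms across trials in Tay's trained model. Your $\rho$ is a Frobenius cosine between the matrices $\mathbf{G}_\alpha$. The $\rho$ that matters is the one in your model, where the $\mathbf{G}_\alpha$ are summed. In Tay's model their alignment never enters $\sum_\alpha \|\partial\mathcal{L}/\partial\mathbf{S}^{\mathrm{Tay}}_\alpha\|_F$ at all.

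Moreover, \cref{tab:gradient_norm} is measured at random initialization, where the paper reports norm correlations below $10^{-3}$. Drop the ``ratio near $0.87$'' remark. With $\rho \approx 0$ your factor is about $1/2$, which still suffices for the heuristic conclusion.
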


\subsubsection{Theoretical Implications}

By synthesizing the theoretical analysis and empirical validation presented above, we draw the following conclusions.

\begin{enumerate}
\item \textbf{Shared interaction subspace:}
Both the four-component scores of the existing method and the single score of the proposed method are generated as linear combinations of bilinear interactions using coefficient matrices belonging to $\mathcal{U}(\mathbf{W}_Q, \mathbf{W}_K)$.

\item \textbf{Gradient aggregation vs.\ separation:}
In the proposed method, the learning signals from all four components are aggregated into a single score, whereas in the existing method, independent gradient pathways emerge for each component.

\item \textbf{Implications for redundancy:}
Score generation is reduced to reweighting within the same interaction set; however, as softmax is nonlinear, equivalence at the attention output level cannot be established from this analysis alone. We therefore empirically verify post-softmax similarity in \cref{fig:similarity} and \cref{fig:formula}(b,c), which together with the shared interaction subspace provide the basis for the limited effective contribution of component-wise softmax.
\end{enumerate}

\subsection{Gradient Flow Analysis}
\label{app:gradient_flow}

This section provides an empirical verification of \cref{thm:gradient_aggregation}.
We compared the gradient flow in the proposed method with that in the method proposed by \citet{tay-etal-2019-lightweight} and confirmed that the theoretical analysis was reflected in the computational graph.

\paragraph{Experimental setup.}
We randomly initialized the quaternion self-attention layer and analyzed the gradient flow.
The experimental conditions are listed in \cref{tab:gradient_exp_setup}.

\begin{table}[h]
\centering
\begin{tabular}{lll}
\toprule
Symbol & Description & Value \\
\midrule
$B$ & Batch size & 32 \\
$T$ & Sequence length & 128 \\
$D_{model}$ & Embedding dimension & 64 \\
$N_{\text{trial}}$ & Number of independent trials & 100 \\
\bottomrule
\end{tabular}
\caption{Experimental setup for gradient analysis.}
\label{tab:gradient_exp_setup}
\end{table}

where $\mathbf{S} \in \mathbb{R}^{T \times T}$ denotes the attention-score matrix of the proposed method (\cref{eq:ours_score_matrix}), and
$\mathbf{S}_0^{\mathrm{Tay}}, \mathbf{S}_1^{\mathrm{Tay}}, \mathbf{S}_2^{\mathrm{Tay}}, \mathbf{S}_3^{\mathrm{Tay}} \in \mathbb{R}^{T \times T}$
denotes the attention-score matrices for each quaternion component in \citet{tay-etal-2019-lightweight}.
We measured the gradient norm $\| \partial \mathcal{L} / \partial \mathbf{S} \|_F$ (Frobenius norm) from loss function $\mathcal{L}$ to each score matrix.

\paragraph{Verification of \cref{thm:gradient_aggregation}.}
\Cref{tab:gradient_norm} presents a comparison of the gradient norms.

\begin{table}[h]
\centering
\begin{tabular}{llc}
\toprule
Method & Target & Gradient Norm \\
\midrule
Ours & Single $\mathbf{S}$ & $(1.72 \pm 0.02) \times 10^{-3}$ \\
\midrule
\citet{tay-etal-2019-lightweight}
& $\mathbf{S}_0^{\mathrm{Tay}}$ & $(4.86 \pm 0.07) \times 10^{-4}$ \\
& $\mathbf{S}_1^{\mathrm{Tay}}$ & $(4.87 \pm 0.06) \times 10^{-4}$ \\
& $\mathbf{S}_2^{\mathrm{Tay}}$ & $(4.87 \pm 0.06) \times 10^{-4}$ \\
& $\mathbf{S}_3^{\mathrm{Tay}}$ & $(4.88 \pm 0.06) \times 10^{-4}$ \\
\bottomrule
\end{tabular}
\caption{Gradient norms for attention-score matrices.}
\label{tab:gradient_norm}
\end{table}

The gradient norm of the proposed method ($1.72 \times 10^{-3}$) is of the same order as the sum of the gradient norms of the four components in the model proposed by \citet{tay-etal-2019-lightweight} 
($(4.86 + 4.87 + 4.87 + 4.88) \times 10^{-4} = 1.95 \times 10^{-3}$), which is consistent with \cref{thm:gradient_aggregation}, where the gradients are aggregated into a single score matrix.

\textbf{Remark.}
Although the mathematical content of \cref{thm:gradient_aggregation} is an application of the chain rule, its significance lies in its design implications for quaternion neural networks.
In gradient aggregation of the proposed method (\cref{eq:dLdS_sum}), all four components send learning signals to a single score $\mathbf{S}$; consequently, conflicting demands across components are averaged, and \emph{consistent alignment} is enforced mathematically.
However, in the gradient separation in the study by \citet{tay-etal-2019-lightweight} (\cref{eq:dLdS_tay}), only $\mathbf{O}_0$ contributes to the gradient of $\mathbf{S}_0$, only $\mathbf{O}_1$ contributes to the gradient of $\mathbf{S}_1$, etc. Consequently, each component is optimized independently without considering the outputs of the other components, which can lead to learning attention distributions that focus on different positions across components (agreement rate of 3.83\% in \cref{tab:agreement}). This property partially explains why performance is maintained even when the four independent component scores are simplified to a single score (\cref{tab:vb_results}).

\subsection{Gradient Correlation Analysis}
\label{app:gradient_independence}

This section analyzes the correlation between the gradients and component-wise score matrices in the model of \citet{tay-etal-2019-lightweight}.
We conducted experiments on randomly initialized and trained models to observe changes through learning.

\paragraph{Experimental setup.}
The same settings as those listed in \cref{tab:gradient_exp_setup} were used.
The input data were randomly generated for both experiments, and the gradient norm correlations were measured by varying the model weights.

\paragraph{Results.}
\Cref{tab:gradient_corr_comparison} presents the gradient norm correlation matrices for the randomly initialized model (a) and the trained model (b).

\begin{table}[h]
\centering
\begin{tabular}{cc}
\begin{tabular}{ccccc}
\toprule
& $0$ & $1$ & $2$ & $3$ \\
\midrule
$0$ & 1.000 & $<10^{-3}$ & $<10^{-3}$ & $<10^{-3}$ \\
$1$ & $<10^{-3}$ & 1.000 & $<10^{-3}$ & $<10^{-3}$ \\
$2$ & $<10^{-3}$ & $<10^{-3}$ & 1.000 & $<10^{-3}$ \\
$3$ & $<10^{-3}$ & $<10^{-3}$ & $<10^{-3}$ & 1.000 \\
\bottomrule
\multicolumn{5}{c}{(a) Random initialization}
\end{tabular}
&
\begin{tabular}{ccccc}
\toprule
& $0$ & $1$ & $2$ & $3$ \\
\midrule
$0$ & 1.000 & 0.826 & 0.317 & 0.879 \\
$1$ & 0.826 & 1.000 & 0.792 & 0.833 \\
$2$ & 0.317 & 0.792 & 1.000 & 0.404 \\
$3$ & 0.879 & 0.833 & 0.404 & 1.000 \\
\bottomrule
\multicolumn{5}{c}{(b) After training}
\end{tabular}
\end{tabular}
\caption{Gradient norm correlation matrices for the model of \citet{tay-etal-2019-lightweight}.
Indices $0,1,2,3$ denote the quaternion components.
(a) At random initialization, the gradient norms are nearly independent.
(b) After training on VoiceBank+DEMAND, a strong correlation emerges (mean off-diagonal: 0.68).
Both experiments use \textbf{random input data}, \emph{largely} isolating the effect of learned weights.}
\label{tab:gradient_corr_comparison}
\end{table}

For random initialization (\cref{tab:gradient_corr_comparison}(a)), all off-diagonal elements were $<10^{-3}$, confirming that the component-wise gradients were nearly independent of each other.
This is consistent with the analysis in \cref{thm:gradient_aggregation}, which indicates that the design of \citet{tay-etal-2019-lightweight} separates the gradient pathways.

In contrast, for the trained model (\cref{tab:gradient_corr_comparison}(b)), \textbf{the inter-component gradient norm correlation increased to a mean of 0.68, despite the random input}.
This indicates that through training, the weights $\mathbf{W}_Q, \mathbf{W}_K$ acquired a correlation structure among the components.

\paragraph{Design implications.}
These provide important insights regarding the design of \citet{tay-etal-2019-lightweight}:

\begin{enumerate}
    \item \textbf{Gradient separation by design:}
    The low correlation ($<10^{-3}$) at random initialization confirms that, by design, the gradient pathways to each $\mathbf{S}_\alpha^{\mathrm{Tay}}$ are separated, as shown in \cref{thm:gradient_aggregation}.

    \item \textbf{Acquisition of the correlation structure through learning:}
    However, after training, this correlation increased to 0.68.
    Because the input is random, this correlation does not depend on the input distribution but is a structure embedded in the learned weights $\mathbf{W}_Q, \mathbf{W}_K$.
    Although the four components have degrees of freedom that allow independent learning, they converge to representations that receive updates of similar magnitude.

    \item \textbf{Evidence consistent with redundancy:}
    A design intended to express component-independent attention distributions can become structurally redundant.
    Therefore, the expressive power available for independent learning may not be utilized independently.

    \item \textbf{Consistency with agreement rate:}
    The gradient norm correlation after training (0.68) and the inter-component agreement rate (3.83\%, \cref{tab:agreement}) appear contradictory at first, but this can be interpreted as follows:
    \begin{itemize}
        \item High gradient norm correlation: The update magnitudes for the four components co-vary and become strongly coupled.
        \item Low agreement rate (\cref{tab:agreement,tab:agreement_detail}): However, owing to the independent softmax for each component, the actual maximum attention positions (argmax) can be distributed across different tokens.
    \end{itemize}
    In component-wise attention, even when component updates are coupled in magnitude, the independent score matrices $\mathbf{S}_\alpha^{\mathrm{Tay}}$ and component-wise softmax can preserve differences in score space as differences in attention distributions.
\end{enumerate}

This property partially explains why the proposed method can achieve a comparable or superior performance while reducing the number of scores to a single score.

\subsection{MACs and Runtime Analysis}
\label{app:flops}

\begin{table}[h]
\centering
\small
\begin{tabular}{crrrrc}
\toprule
$T$ & \multicolumn{2}{c}{Ours} & \multicolumn{2}{c}{\citet{tay-etal-2019-lightweight}} & Speedup \\
\cmidrule(lr){2-3} \cmidrule(lr){4-5}
 & MACs & Time [ms] & MACs & Time [ms] & \\
\midrule
512  & 134M  & 1.210  & 336M  & 1.536  & 1.27$\times$ \\
1024 & 537M  & 1.842  & 1.34G & 3.038  & 1.65$\times$ \\
2048 & 2.15G & 5.251  & 5.37G & 9.780  & 1.86$\times$ \\
4096 & 8.59G & 15.500 & 21.5G & 32.231 & 2.08$\times$ \\
\bottomrule
\end{tabular}
\vspace{2pt}
\caption{Computational cost and wall-clock latency of a single quaternion self-attention layer ($D_{model}{=}64$, $H{=}8$). Latency is measured under matched implementation conditions on an NVIDIA A100 80GB PCIe with batch size 1, using \texttt{torch.cuda.Event} after 50 warmup iterations; values are median over 200 runs.}
\label{tab:macs_comparison}
\end{table}

We compared the theoretical computational cost (MACs) and measured the inference time for a single quaternion self-attention layer.
The evaluation was conducted on an NVIDIA A100 80GB PCIe with a batch size of 1, measuring only the forward pass.
Because GPU execution is asynchronous, timing measurements were performed using \texttt{torch.cuda.Event} with synchronization after 50 warmup iterations, and we report the median over 200 runs (standard deviations are shown in parentheses).
We fixed the embedding dimension at $D_{model}{=}64$ and the number of heads at $H{=}8$ and evaluated the scaling behavior across sequence lengths $T \in \{512, 1024, 2048, 4096\}$.

As shown in \cref{tab:macs_comparison}, the proposed method reduces the theoretical MACs by an average of 60\% compared with the method of \citet{tay-etal-2019-lightweight}, achieving speedups of $1.27\times$ to $2.08\times$ in wall-clock time. 
This improvement stems from the proposed shared-score mechanism. In contrast to the standard quaternion attention, which computes four component-wise score matrices (equivalent to 16 real-valued matmuls for the query-key  interaction), our approach computes a single shared attention map. This design reduces the number of softmax normalizations from four (one per quaternion component) to a single shared softmax function. 
In terms of real-valued matrix multiplications for score computation, it reduces the cost from 16 matmuls (full Hamilton product) to 4 matmuls (quaternion inner product), that is, a theoretical reduction of 75\%. Note that the measured wall-clock time does not scale strictly proportionally with the theoretical MACs owing to factors such as the GPU kernel selection and memory bandwidth constraints.

\subsection{Detailed Agreement Analysis}
\label{app:agreement_detail}

\Cref{tab:agreement_detail} presents the agreement rates for each layer-head combination. The frequency-direction layers (mean 6.58\%) exhibit higher agreement rates than the time-direction layers (mean 1.08\%), but both remain at low levels. The expected value under random selection is $1/T$, which depends on the sequence length $T$. 
To assess robustness beyond Top-1, we also measured Top-5 agreement (\Cref{tab:interaction_summary}).
The observed rate of 7.29\% remains low (chance level: 3.16\%),
suggesting that the component-wise attention often selects different argmax positions even when considering multiple high-attention candidates.

\begin{table}[t]
\centering
\small
\setlength{\tabcolsep}{5pt}
\begin{tabular}{lcc|lcc}
\toprule
\multicolumn{3}{c|}{\textit{Frequency layers}} & \multicolumn{3}{c}{\textit{Time layers}} \\
Layer-Head & Mean (\%) & Std (\%) & Layer-Head & Mean (\%) & Std (\%) \\
\midrule
freq\_layers.0.head0 & 10.49 & 10.69 & time\_layers.0.head0 & 0.35 & 0.55 \\
freq\_layers.0.head1 &  2.55 &  1.97 & time\_layers.0.head1 & 0.32 & 0.91 \\
freq\_layers.0.head2 &  1.78 &  2.64 & time\_layers.0.head2 & 0.33 & 0.41 \\
freq\_layers.0.head3 &  8.49 &  5.37 & time\_layers.0.head3 & 0.24 & 0.44 \\
freq\_layers.1.head0 & 16.23 & 11.91 & time\_layers.1.head0 & 1.77 & 2.42 \\
freq\_layers.1.head1 &  2.52 &  2.97 & time\_layers.1.head1 & 3.26 & 3.45 \\
freq\_layers.1.head2 &  6.83 &  6.63 & time\_layers.1.head2 & 1.49 & 2.60 \\
freq\_layers.1.head3 &  3.78 &  3.54 & time\_layers.1.head3 & 0.85 & 1.38 \\
\midrule
\multicolumn{3}{c|}{Overall} & \multicolumn{3}{c}{3.83 (\%)} \\
\bottomrule
\end{tabular}
\caption{Per layer-head agreement rates in the Hamilton product attention~\citep{tay-etal-2019-lightweight}.}
\label{tab:agreement_detail}
\end{table}

\begin{table}[t]
    \centering
    \begin{tabular}{l|c|c|c}
        \toprule
        \textbf{Metric} & \textbf{Observed} & \textbf{Chance ($K/T$)} & \textbf{Ratio} \\
        \midrule
        Top-1 Agreement & 3.83\% & 0.63\% & $6.07\times$ \\
        Top-5 Agreement & 7.29\% & 3.16\% & $2.31\times$ \\
        \bottomrule
    \end{tabular}
    \caption{Inter-component agreement in \citet{tay-etal-2019-lightweight}.}
    \label{tab:interaction_summary}
\end{table}

\subsection{Qualitative Comparison on Spectral Reconstruction}
\label{app:qualitative_analysis}

\begin{figure*}[t]
    \centering
    \includegraphics[width=\textwidth]{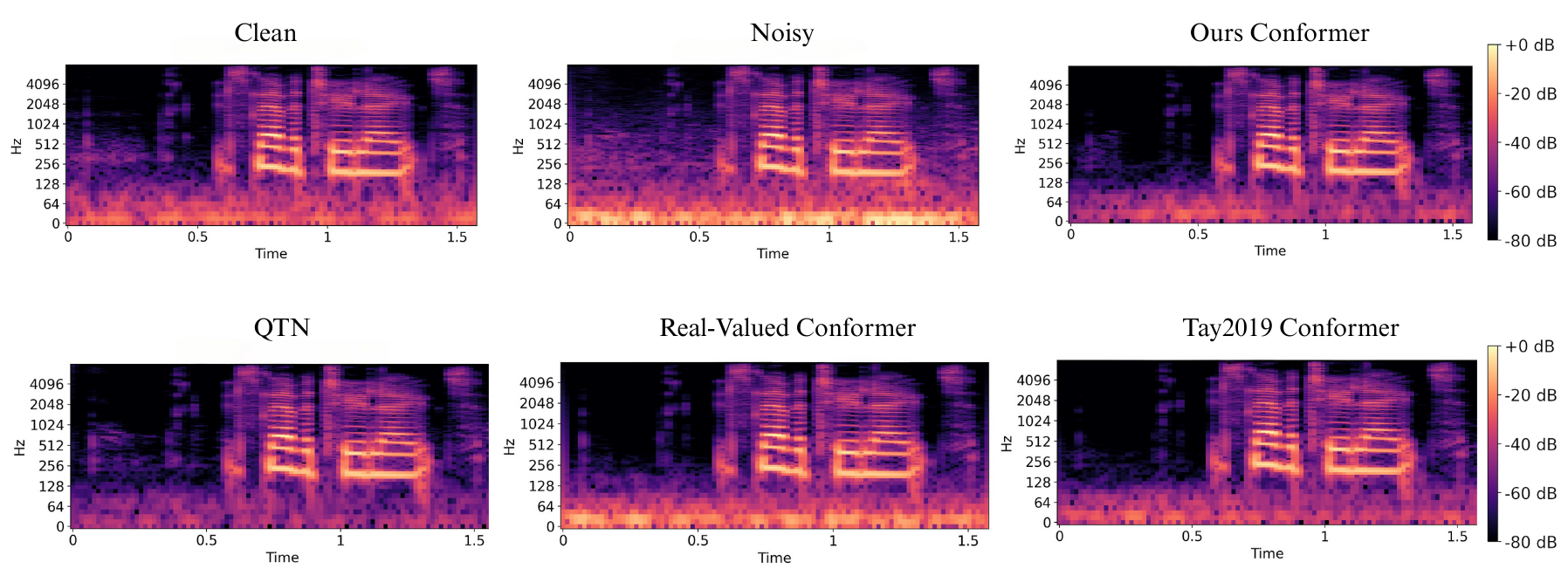}
    \caption{Log-magnitude spectrogram comparison on VoiceBank+DEMAND (a representative test utterance; all panels share the same color scale). The real-valued Conformer leaves residual low-frequency noise, whereas QTN~\cite{Yang2023_QTN} tends to over-suppress low-frequency components. Both quaternion attention methods (ours and that of \citet{tay-etal-2019-lightweight}) are visually close to the clean reference.}
    \label{fig:spectrogram}
\end{figure*}

To further validate the effectiveness of the proposed method, we conducted a qualitative analysis of the spectrograms on the VoiceBank+DEMAND test set. Figure~\ref{fig:spectrogram} presents a visual comparison between the proposed Shared-Score Quaternion Conformer (Ours) and three baselines: the QTN~\cite{Yang2023_QTN} and Real-valued Conformer, and Quaternion Conformer~\cite{tay-etal-2019-lightweight}.

\paragraph{Comparison with Real-Valued Baseline.}
This example presents the advantage of quaternion representations in suppressing stationary low-frequency noise.
Specifically, at approximately 64 Hz, the real-valued Conformer leaves visible residual energy that resembles a noisy input (Figure~\ref{fig:spectrogram}, bottom-center).
In contrast, our shared-score quaternion Conformer produces a cleaner background that is closer to the clean reference (top-left).
This observation is consistent with the intuition that inter-component coupling in quaternion models can help capture dependencies among related features.

\paragraph{Comparison with QTN.}
Although QTN~\citep{Yang2023_QTN} suppresses noise, it appears overly aggressive in this case.
The low-frequency region is noticeably darker than the clean reference, suggesting over-suppression, where speech components may be eliminated together with noise.
This may be related to QTN's reliance on local convolution operations, which can be less effective at leveraging long-range temporal context than attention-based models.
These observations align with the lower objective scores (PESQ 2.76) reported in Table~\ref{tab:vb_results}.

\paragraph{Comparison with Component-wise Attention.}
Finally, our output is highly similar to that of the computationally expensive model of \citet{tay-etal-2019-lightweight}.
Both quaternion attention methods maintain a good balance between noise suppression and signal preservation, closely matching the spectral characteristics of the clean target.
This qualitative evidence supports our quantitative findings that a single shared score can be sufficient in this setting while reducing the score-computation cost by 75\%.


\section{Experimental Details}
\label{app:experiments}

\subsection{Model Architecture}

\begin{figure*}[t]
    \centering
    \includegraphics[width=\textwidth]{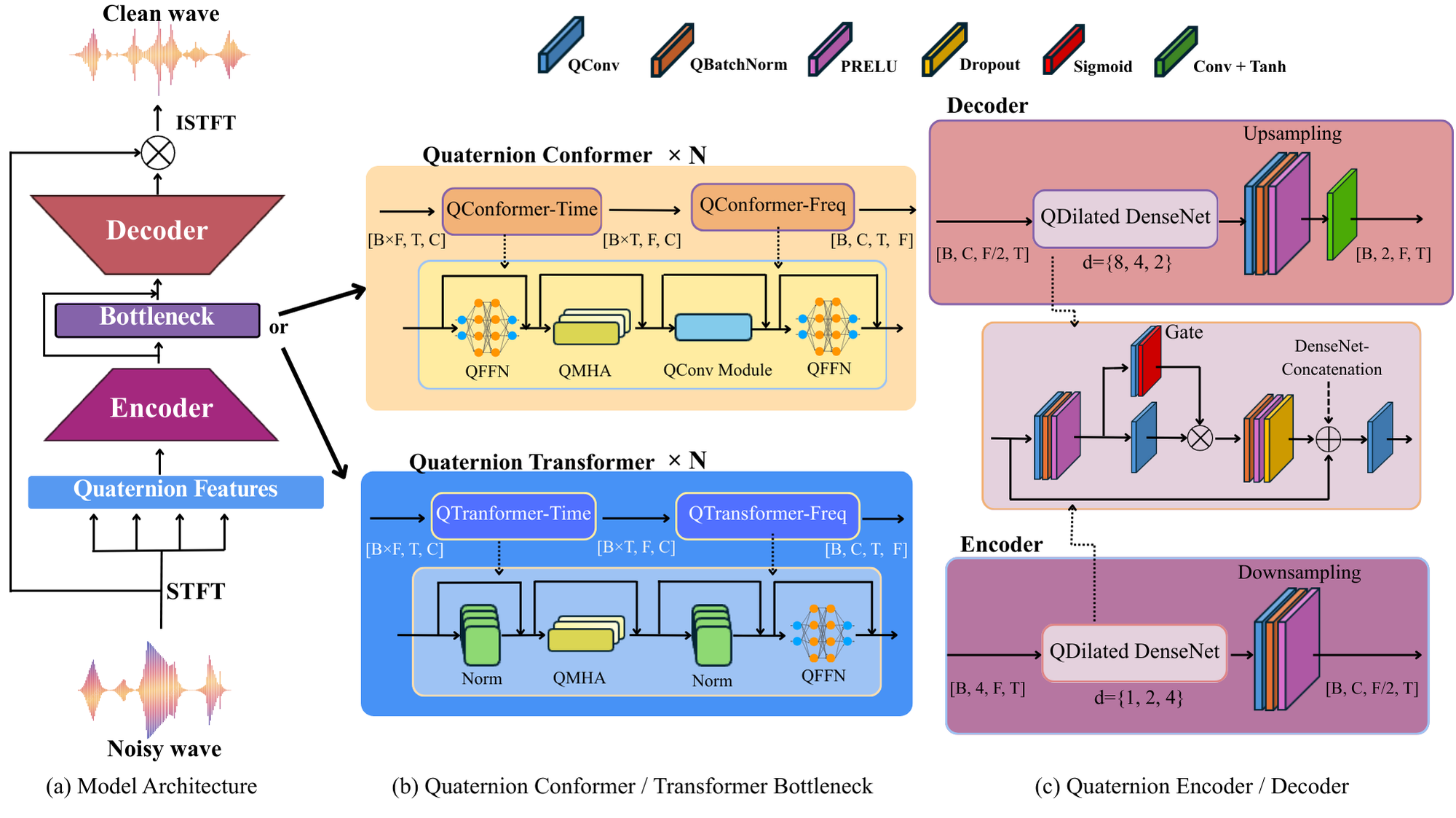}
    \caption{Overview of the proposed quaternion speech enhancement model. (a) Overall encoder--bottleneck--decoder architecture. (b) Quaternion Transformer/Conformer block with the proposed shared-score quaternion self-attention. (c) QDilated DenseNet module with gated activation.}
    \label{fig:model-architecture}
\end{figure*}

The model adopts an encoder--bottleneck--decoder architecture (\cref{fig:model-architecture}(a)).

\paragraph{Encoder.}
The encoder consists of a QDilated DenseNet (\cref{fig:model-architecture}(c)). It assumes quaternion features $\mathbf{X} \in \mathbb{H}^{F \times T}$ as input and densely stacks QConv--QBatchNorm--PReLU--dropout layers with dilation rates $d \in \{1, 2, 4\}$. Similar to the WaveNet~\citep{vandenoord16_ssw} architecture, we introduced a sigmoid-based gating mechanism to achieve a wide receptive field and selective feature propagation. The output was down-sampled along the frequency axis to yield $\mathbb{H}^{C \times (F/2) \times T}$.

\paragraph{Bottleneck.}
The bottleneck stacks $N$ layers of quaternion Transformer or quaternion Conformer blocks using the proposed shared-score quaternion self-attention to capture global dependencies in both the time and frequency directions (\cref{fig:model-architecture}(b)). The quaternion Transformer is based on a Transformer~\citep{vaswani2017attention} and consists of QLayerNorm--QMHA--QFFN. The quaternion Conformer follows the Conformer~\citep{gulati2020conformer} architecture, augmenting it with a quaternion convolution module (QConv Module) to complement local patterns. To apply self-attention in both the time and frequency directions, the tensor was reshaped to $[B \times F, T, C]$ and $[B \times T, F, C]$, and the same block was applied to each.

\paragraph{Decoder.}
The decoder consists of a QDilated DenseNet, but with dilation rates $d \in \{8, 4, 2\}$, starting from a large receptive field and gradually recovering from finer structures. Upsampling is performed along the frequency axis to restore the dimensions from $F/2$ to the original $F$. Finally, a two-channel real-valued convolution layer outputs the magnitude mask $M_{\mathrm{mag}}$ and the phase correction $M_{\mathrm{phase}}$, and the enhanced waveform is reconstructed by applying the inverse STFT to the enhanced spectrum obtained as
\begin{align}
  |\hat{S}_{t,f}| &= M_{\mathrm{mag},t,f} \cdot |X_{t,f}|, \\
  \angle \hat{S}_{t,f} &= \angle X_{t,f} + M_{\mathrm{phase},t,f}.
\end{align}

\subsection{Training Loss}
\label{app:training_loss}

We employed a composite loss function that considered the time, frequency, and perceptual quality aspects. The overall loss $\mathcal{L}$ is defined as the following weighted sum:
\begin{equation}
  \mathcal{L} = 3.5\,\mathcal{L}_{\mathrm{MSTFT}} + 2.0\,\mathcal{L}_{\mathrm{SI\text{-}SDR}} + \mathcal{L}_{\mathrm{RMS}} + 2.5\,\mathcal{L}_{\mathrm{CL1}} + \mathcal{L}_{\mathrm{PESQ}}.
\end{equation}
The loss weights were determined via a small validation sweep to maximize the PESQ of the validation set. Here, $x, \hat{x} \in \mathbb{R}^{N}$ denote the target and estimated time-domain waveforms, respectively, and $\mathbf{S}, \hat{S} \in \mathbb{C}^{F \times T}$ denote the target and estimated complex spectrograms, respectively.

\paragraph{Multi-Resolution STFT Loss~\citep{Yamamoto2020}.}
This loss is defined as the sum of the spectral convergence loss and magnitude loss across different FFT sizes $\{512, 1024, 2048\}$, which helps improve the reconstruction accuracy at multiple time-frequency resolutions.

\paragraph{SI-SDR Loss~\citep{Li2020}.}
This loss guides the training to maximize the scale-invariant signal-to-distortion ratio (SI-SDR), thereby improving waveform fidelity in the time domain.

\paragraph{RMS Loss.}
The root mean square error between the estimated and target waveforms encourages the matching of the energy levels:
\begin{equation}
  \mathcal{L}_{\mathrm{RMS}} = \sqrt{\frac{1}{N}\sum_{n=1}^{N}(\hat{x}_n - x_n)^2}.
\end{equation}

\paragraph{Complex L1 Loss.}
This loss is defined as the L1 error in the real and imaginary parts of the complex spectrogram, which enables simultaneous optimization of the magnitude and phase:
\begin{equation}
  \mathcal{L}_{\mathrm{CL1}} = \|\mathrm{Re}[\hat{S}] - \mathrm{Re}[S]\|_1 + \|\mathrm{Im}[\hat{S}] - \mathrm{Im}[S]\|_1.
\end{equation}

\paragraph{PESQ Loss~\citep{Kim2019pesqloss}.}
For direct optimization of perceptual quality, we incorporated a differentiable approximation of PESQ~\citep{PESQ} into the loss.

\subsection{Training Configuration}
\label{app:training_config}

This section describes the preprocessing conditions and settings used for training and evaluation. All the audio recordings were resampled at 16\,kHz.

\paragraph{VoiceBank+DEMAND.}
The STFT was computed from the time-domain waveforms using a frame length of $n_{\mathrm{fft}}=400$, $\mathrm{hop\_length}=100$, and a Hanning window. During training, segments of 16,000 samples (approximately 1 s) were extracted, and training was performed for 400 epochs with a batch size of 8. We employed the AdamW optimizer~\citep{loshchilov2018decoupled} ($\beta_1=0.5$, $\beta_2=0.999$) with an initial learning rate of $2 \times 10^{-4}$. To prevent gradient explosion, gradient clipping was applied with a maximum norm of 1.0.

\paragraph{DNS-Challenge 3 dataset.}
We employed a publicly available 16\,kHz version of the data. The STFT was computed with a frame length of $n_{\mathrm{fft}}=512, \mathrm{hop\_length}=256$ and the Hanning window. During training, segments of 32,000 samples (approximately 2 s) were extracted, and training was performed for 150 epochs with a batch size of 16. For validation and testing, performance was evaluated using segments of 160,000 samples (approximately 10 s). We used the dev\_testset provided by the DNS-Challenge 3 organizers. We employed the AdamW optimizer ($\beta_1=0.5$, $\beta_2=0.999$) with a learning rate of $1 \times 10^{-4}$. Similar to VoiceBank+DEMAND, gradient clipping (maximum norm = 1.0) was applied. RTF measurements were conducted on an NVIDIA A100 80GB GPU and an Intel Xeon Gold 6342 CPU with batch size 1 over 448 utterances (6.86k s) under matched implementation conditions for both methods.

\paragraph{Hyperparameters.}
The architectural configurations of QDenseNet, QTransformer, and QConformer are listed in \cref{tab:hyperparam_comparison}. All the models used identical STFT preprocessing, loss function definitions, and optimizer settings (AdamW + gradient clipping). The quaternion RMSNorm was applied to queries and keys. Data augmentation was not performed.

\begin{table}[t]
\centering
\begin{tabular}{lcc}
\toprule
\textbf{Hyperparameter} & \textbf{Quaternion} & \textbf{Real-valued} \\
\midrule
Encoder blocks & 3 & 4 \\
Growth rate & 64 & 64 \\
Kernel size & $(3, 3)$ & $(3, 3)$ \\
Bottleneck $d_{\text{model}}$ & 64 & 64 \\
Bottleneck $n_{\text{heads}}$ & 4 & 4 \\
Bottleneck $n_{\text{layers}}$ & 2 & 2 \\
Conv kernel size (Conformer) & 31 & 31 \\
Dropout & 0.1 & 0.1 \\
Decoder blocks & 3 & 4 \\
\bottomrule
\end{tabular}
\caption{Hyperparameter configuration comparison between the proposed quaternion model and the real-valued baseline.}
\label{tab:hyperparam_comparison}
\end{table}


\section{Cross-Domain Validation Details}
\label{sec:appendix_cross_domain}

This appendix provides the detailed experimental setup and full results for the cross-domain experiments summarized in \cref{sec:cross_domain}, covering image classification on CIFAR-100~\cite{krizhevsky2009learning} and text classification on SST-2~\cite{socher2013recursive}.

\subsection{Image Classification (CIFAR-100)}
\label{sec:appendix_cifar}

\begin{table}[t]
\centering
\begin{tabular}{lccc}
\toprule
Method & Accuracy (\%) & Training Time (min) & Speedup \\
\midrule
\citet{tay-etal-2019-lightweight} & 71.09 \small{$\pm$ 0.34} & 70.56 \small{$\pm$ 1.97} & -- \\
QTN \citep{Yang2023_QTN} & 70.09 \small{$\pm$ 0.17} & 28.62 \small{$\pm$ 0.22} & 2.47$\times$ \\
Ours & 70.94 \small{$\pm$ 0.24} & 50.47 \small{$\pm$ 0.88} & 1.40$\times$ \\
\bottomrule
\end{tabular}
\caption{Image classification results on CIFAR-100 (Mean $\pm$ Std over 3 runs).}
\label{tab:cifar100}
\end{table}

\subsubsection{Experimental Setup}

We adopted a configuration in which image features were extracted using a quaternion ResNet, and classification was performed using a quaternion Transformer. The RGB images were encoded as pure imaginary quaternions with the real part set to 0, and each RGB channel was assigned to the imaginary parts (\cref{eq:rgb}).
\begin{align}
\label{eq:rgb}
    Q = 0 + R{\sf i} + G{\sf j} + B{\sf k}
\end{align}

\paragraph{Quaternion ResNet encoder.}
We used a 4-stage quaternion ResNet encoder with base channels 64 and 2 residual blocks per stage. The stride pattern was [1,2,2,2], producing a 4$\times$4 grid (16 tokens) before the transformer.

\paragraph{Transformer bottleneck.}
We used $2$ layers with $H=4$ heads, dropout 0.1, embed\_dim 64. The total number of parameters in the model was approximately 1.7 M. We used the CIFAR-100 dataset (100 classes, 50,000 training images, and 10,000 test images) and trained it for 100 epochs with a batch size of 128.

We report the mean and standard deviation over three independent runs with different random seeds (42, 123, 456).

We compared the proposed method with two baselines: 1) the quaternion attention of \citet{tay-etal-2019-lightweight} as a direct baseline, and 2) QTN \citep{Yang2023_QTN} as a reference for convolution-based efficient quaternion architectures. All models were trained under identical conditions. We employed the AdamW optimizer (weight decay = 0.05)~\citep{loshchilov2018decoupled} with a cosine annealing scheduler (with 10 epochs of warmup) to control the learning rate. For data augmentation, we applied Mixup ($\alpha=0.8$), label smoothing (0.1), and random erasing ($p=0.25$) techniques.

\subsubsection{Results and Comparison with QTN Results}

\Cref{tab:cifar100} presents the full results, including those of QTN as an additional reference point. QTN achieved the highest speedup ($2.47\times$), which is expected considering its reliance on quaternion convolutions that inherently reduce computational complexity compared to global self-attention \citep{Yang2023_QTN}. However, this efficiency comes at the cost of representational power, resulting in a 1.0\% accuracy drop compared with the Hamilton-product baseline. Replacing global attention with local convolutions limits the ability of the model to capture the long-range dependencies required for this task.

In contrast, our shared-score method maintains the global context modeling capability of the standard Hamilton-product attention, achieving an accuracy comparable to that of the baseline ($70.94 \pm 0.24$\% vs.\ $71.09 \pm 0.34$\%) while delivering a $1.40\times$ training-time speedup. This contrast illustrates that our approach reduces the computational cost without sacrificing the global modeling capabilities essential for high-performance Transformers. (Speedup here refers to the reduction in total training wall-clock time under this setup.)

\subsubsection{Gradient Norm Correlation on CIFAR-100}

The gradient norm correlation analysis (\cref{tab:gradient_corr_cifar100}) suggests that the update-magnitude coupling observed in speech enhancement (\cref{tab:gradient_corr_comparison}) can also arise in image classification. The mean off-diagonal correlation increases from near zero at initialization ($-0.06$) to $0.41$ after training, indicating moderate coupling after learning.

\begin{table}[t]
\centering
\small
\setlength{\tabcolsep}{6pt}
\begin{tabular}{c|cccc|c|cccc}
\toprule
& 0 & 1 & 2 & 3 & & 0 & 1 & 2 & 3 \\
\midrule
0 &  1.000 & -0.248 & -0.124 & -0.063 & 0 & 1.000 & 0.393 & 0.451 & 0.517 \\
1 & -0.248 &  1.000 &  0.059 & -0.006 & 1 & 0.393 & 1.000 & 0.200 & 0.524 \\
2 & -0.124 &  0.059 &  1.000 &  0.038 & 2 & 0.451 & 0.200 & 1.000 & 0.401 \\
3 & -0.063 & -0.006 &  0.038 &  1.000 & 3 & 0.517 & 0.524 & 0.401 & 1.000 \\
\bottomrule
\end{tabular}
\vspace{2pt}

\hspace{1.5em}\textit{(a) Random initialization} \hspace{8em} \textit{(b) After training}

\caption{Gradient norm correlation matrices for the results of \citet{tay-etal-2019-lightweight} on CIFAR-100. Indices $0, 1, 2, 3$ denote the quaternion components. (a) At random initialization, gradient norms are approximately uncorrelated (mean off-diagonal: $-0.06$). (b) After training on CIFAR-100, moderate correlations emerge (mean off-diagonal: $0.41$). Both experiments use random input data, largely reducing the influence of the input distribution; therefore, the change is attributable mainly to learned weights.}
\label{tab:gradient_corr_cifar100}
\end{table}

\subsection{Text Classification (SST-2)}
\label{sec:appendix_sst2}

\begin{table}[t]
\centering
\begin{tabular}{lcc}
\toprule
Method & Accuracy (\%) & Latency (ms / batch) \\
\midrule
\citet{tay-etal-2019-lightweight} & 81.12 \small{$\pm$ 0.70} & 3.92 \small{$\pm$ 0.12} \\
Ours & 81.04 \small{$\pm$ 0.30} & 2.87 \small{$\pm$ 0.07} \\
\bottomrule
\end{tabular}
\caption{Text classification results on SST-2 (mean $\pm$ std over 3 runs). Latency was measured under matched implementation conditions.}
\label{tab:sst2}
\end{table}

\subsubsection{Experimental Setup}

To verify that the redundancy claim also holds in the NLP domain, we evaluated the proposed method on the Stanford Sentiment Treebank (SST-2)~\cite{socher2013recursive}, a binary sentiment classification benchmark.

\paragraph{Quaternion feature representation.}
Each input token embedding was reshaped into a quaternion vector by splitting the embedding dimension into four equal parts, with each part assigned to one quaternion component (real and three imaginary parts).

\paragraph{Model architecture.}
We used a quaternion Transformer encoder with $2$ layers, $H=4$ heads, dropout 0.1, and quaternion embedding dimension 64, yielding a total of approximately 100K parameters. A mean-pooled representation was passed to a linear classifier over two classes.

\paragraph{Training.}
All models were trained under matched architecture capacity, batch size, and learning-rate scaling. We used the AdamW optimizer~\citep{loshchilov2018decoupled} with a cosine schedule and a short warmup. We report the mean and standard deviation over 3 independent runs with different random seeds.

\subsubsection{Results}

\Cref{tab:sst2} reports the SST-2 results. The proposed method preserves accuracy within seed-level variance ($81.04 \pm 0.30$\% vs.\ $81.12 \pm 0.70$\%) while reducing inference latency by $26.8\%$ ($1.37\times$ faster). This trend is consistent with the speedups observed on speech enhancement and CIFAR-100, indicating that the efficiency benefit of the shared-score design is not specific to a single modality.

\end{document}